\definecolor{light-gray}{gray}{0.85}
\newcommand{\defeq}{\mathrel{\mathop:}=}
\newcommand{\vect}[1]{\ensuremath{\mathbf{#1}}}
\newcommand{\argmax}{\mathop{\rm argmax}}
\newcommand{\rank}{\mathrm{rank}}
\newcommand{\poly}{\mathrm{poly}}
\newcommand{\E}{\mathbb{E}}
\renewcommand{\P}{\mathbb{P}}
\newcommand{\cO}{\mathcal{O}}
\newcommand{\N}{\mathbb{N}}
\newcommand{\R}{\mathbb{R}}
\newcommand{\M}{\mat{M}}
\renewcommand{\a}{\vect{a}}
\renewcommand{\o}{\vect{o}}
\newcommand{\cD}{\mathcal{D}}
\newcommand{\cS}{\mathcal{S}}
\newcommand{\cA}{\mathcal{A}}
\newcommand{\cB}{\mathcal{B}}
\newcommand{\up}[1]{\overline{#1}}
\newcommand{\algName}{\textsf{OMLE-Equilibrium}}
\newcommand{\algNameMulti}{\textsf{multi-step} \textsf{OMLE-Equilibrium}}
\newcommand{\algAdv}{\textsf{OMLE-Adversary}}
\newcommand{\optequi}{\textsf{Optimistic\_Equilibrium}}
\newcommand{\reg}{{\rm Regret}}
\renewcommand{\th}{^{\rm th}}
\newcommand{\Pidet}{\Pi^{\text{det}}}
\newcommand{\vv}[1]{\bar{#1}}
\newtheorem{theorem}{Theorem}
\newtheorem{corollary}[theorem]{Corollary}
\newtheorem{remark}[theorem]{Remark}
\theoremstyle{definition}
\newtheorem{definition}[theorem]{Definition}
\newtheorem{assumption}{Assumption}
\renewcommand{\O}{\mathbb{O}}
\newcommand{\T}{\mathbb{T}}
\renewcommand{\o}{\vect{o}}
\renewcommand{\M}{\mathbb{M}}
\newcommand{\fA}{\mathfrak{A}}
\begin{document}

\title{\textbf{Sample-Efficient Reinforcement Learning of Partially Observable Markov Games}}

\author{
  Qinghua Liu\thanks{Princeton University. Email: \texttt{qinghual@princeton.edu
}}
  \and
 Csaba Szepesvári\thanks{DeepMind and University of Alberta. Email: \texttt{szepesva@ualberta.ca}}
  \and
  Chi Jin\thanks{Princeton University. Email: \texttt{chij@princeton.edu}}
}
\date{}

\maketitle

\begin{abstract}
This paper considers the challenging tasks of Multi-Agent Reinforcement Learning (MARL) under partial observability, where each agent only sees her own individual observations and actions that reveal incomplete information about the underlying state of system. This paper studies these tasks under the general model of multiplayer general-sum Partially Observable Markov Games (POMGs), which is significantly larger than the standard model of Imperfect Information Extensive-Form Games (IIEFGs). We identify a rich subclass of POMGs---weakly revealing POMGs---in which sample-efficient learning is tractable. In the self-play setting, we prove that a simple algorithm combining optimism and Maximum Likelihood Estimation (MLE) is sufficient to find approximate Nash equilibria, correlated equilibria, as well as coarse correlated equilibria of weakly revealing POMGs, in a polynomial number of samples when the number of agents is small. In the setting of playing against adversarial opponents, we show that a variant of our optimistic MLE algorithm is capable of achieving sublinear regret when being compared against the optimal maximin policies. To our best knowledge, this work provides the first line of sample-efficient results for learning POMGs.

\end{abstract}


\section{Introduction}

This paper studies Multi-Agent Reinforcement Learning (MARL) under \emph{partial observability}, where each player tries to maximize her own utility via interacting with an unknown environment as well as other players. In addition, each agent only sees her own observations and actions, which reveal incomplete information about the underlying state of system. A large number of real-world applications can be cast into this framework: in Poker, cards in a player's hand are hidden from the other players; in many real-time strategy games, players have only access to their local observations; in multi-agent robotic systems, agents with first-person cameras have to cope with noisy sensors and occlusions. While practical MARL systems have achieved remarkable success in a set of partially observable problems including Poker \citep{brown2019superhuman}, Starcraft \citep{vinyals2019grandmaster}, Dota \citep{berner2019dota} and autonomous driving \citep{shalev2016safe}, the theoretical understanding of MARL under partial observability remains very limited.

The combination of partial observability with multiagency introduces a number of unique challenges. The non-Markovian nature of the observations forces the agent to maintain memory and reason about beliefs of the system state, all while exploring to collect information about the environment. Consequently, well-known complexity-theoretic results show that learning and planning in partially observable environments is statistically and computationally intractable even in the single-agent setting \citep{papadimitriou1987complexity,mundhenk2000complexity,vlassis2012computational,mossel2005learning}. 
The presence of interaction between multiple agents further complicates the partially observable problems. In addition to dealing with the adaptive nature of other players who can adjust their strategies according to the learner's past behaviors, the learner is further required to discover and exploit the information asymmetry due to the separate observations of each agent.

As a result, prior theoretical works on partially observable MARL have  been mostly focused on a small subset of problems with strong structural assumptions. For instance, the line of works on Imperfect Information Extensive-Form Games (IIEFG) \citep[see, e.g.,][]{zinkevich2007regret,gordon2007no,farina2021model,kozuno2021model} assumes tree-structured transition with small depth\footnote{The sample complexity of learning IIEFGs scale polynomially with respect to the number of information sets, which typically has an exponential growth in depth.} as well as a special type of emission which can be represented as \emph{information sets}. In contrast, this paper considers a more general mathematical model known as Partially Observable Markov Games (POMGs). POMGs are the natural extensions of both Partially Observable Markov Decision Processes (POMDPs)---the standard model for single-agent partially observable RL, and Markov Games \citep{shapley1953stochastic}---the standard model for fully observable MARL. Despite the complexity barriers of learning partially observable systems apply to POMGs, they are of a worst-case nature, which do not preclude efficient algorithms for learning interesting subclasses of POMGs. This motivates  us to ask the following question:

\vspace{-0.25ex}
\begin{center}
\textbf{Can we develop efficient algorithms that learn a rich class of POMGs?}
\end{center}
\vspace{-0.25ex}

In this paper, we provide the first positive answer to the highlighted question in terms of the \emph{sample efficiency}. \footnote{For computational efficiency, due to the inherent hardness of planning in POMDPs, all existing provable algorithms that learn large classes of POMDPs (single-agent version of POMGs) require super-polynomial time. We leave the challenge of computationally efficient learning for future work.} 
We identify a rich family of tractable POMGs---\emph{weakly revealing} POMGs (see Section \ref{sec:weakly}). The weakly revealing condition only requires the joint observations of all agents to reveal certain amount of information about the latent states, which is satisfied in many real-world applications. The condition rules out the pathological instances where no player has any information to distinguish latent states, which prevents efficient learning in the worst case. 

In the self-play setting where the algorithm can control all the players to learn the equilibria by playing against itself, this paper proposes a new simple algorithm---\emph{Optimistic Maximum Likelihood Estimation for Learning Equilibria} (\algName). As the name suggests, it combines optimism, MLE principles with equilibria finding subroutines. The algorithm provably finds approximate Nash equilibria, coarse correlated equilibria and correlate equilibria of any weakly-revealing POMGs using a number of samples polynomial in all relevant parameters. 

In the setting of playing against adversarial opponents, we measure  the performance of our algorithm by  comparing against the optimal maximin policies. We first prove that  learning in this setting is hard if each player can only see her own observations
and actions. Nevertheless, if the agent is allowed to access other players' observations and actions \emph{after} each episode of play (e.g., watch the replays of the games from other players' perspectives afterwards), then we can design a new algorithm \algAdv~which achieves sublinear regret.

To our best knowledge, this is the first line of provably sample-efficient results for learning rich classes of POMGs. Importantly, the classes of problems that can be learned in this paper are significantly larger than known tractable classes of MARL problems under partial observability.

\subsection{Technical novelty}\label{subsec:tech-novel}
This paper builds upon the recent progress in learning single-agent POMDPs \citep{Liu2022when}, which identifies the class of weakly revealing POMDPs and develops OMLE algorithm for learning the optimal policy. Besides obtaining a completely new set of results in the multi-agent setting, we here highlight a few contributions and technical novelties of this paper comparing to \cite{Liu2022when}.

\vspace{-0.5ex}

\begin{itemize}[leftmargin=*,itemsep=1pt]
\item This paper rigorously formulates the models, related concepts and learning objectives of multi-player general-sum POMGs, and provides the first line of sample-efficient learning results.
\item Extending the weakly revealing conditions into the multi-agent setting lead to two natural candidates: either (a) joint observations or (b) individual observations are required to weakly reveal the state information. This paper shows the former (the weaker assumption) suffices to guarantee tractability.
\item Results in the self-play setting requires careful design of optimistic planning algorithms that effectively address the game-theoretical aspects of the problem under partial observability. We achieve this by Subroutine \ref{subalg}, which is even distinct from the standard techniques for learning MGs.
\item The discussions and results in the setting of playing against adversarial opponents are completely new, and unique to the multi-agent setup. 
\end{itemize}


\subsection{Related Works}
\label{sec:related}

Reinforcement learning has been extensively studied in the single-agent fully-observable setting \citep[see, e.g.,][and the references therein]{azar2017minimax,dann2017unifying,jin2018q,jin2020provably, zanette2020learning,jiang2017contextual,jin2021bellman} . For the purpose of this paper, we focus on reviewing existing works on partially observable RL and  multi-agent RL in the \emph{exploration} setting.

\paragraph{Markov games} In recent years, there has been growing interest in studying Markov games \citep{shapley1953stochastic} --- the standard generalization of MDPs from the single-player setting to the multi-player setting. Various sample-efficient algorithms have been designed for either  two-player zero-sum MGs \citep[e.g.,][]{brafman2002r,wei2017online,bai2020provable,liu2021sharp,NEURIPS2020_172ef5a9,xie2020learning,jin2021power} or  multi-player general-sum MGs \citep[e.g.,][]{liu2021sharp,jin2021v,song2021can,daskalakis2022complexity}.
However, all these works rely on  the states being fully observable, while the POMGs studied in this paper allow states to be only partially observable, which strictly generalizes MGs.

\paragraph{POMDPs} POMDPs generalize  MDPs from the fully observable setting to the partially observable setting.
It is well-known that in POMDPs both planning \citep{papadimitriou1987complexity,vlassis2012computational} and model estimation \citep{mossel2005learning} are  computationally hard  in the worst case.
Besides, reinforcement learning of  POMDPs is also known to be statistically hard: \cite{krishnamurthy2016pac} proved that finding a near-optimal policy of a POMDP in the worst case requires a number of samples that is exponential in the episode length. The hard instances are those pathological POMDPs where the observations contain no useful information for identifying the system dynamics. 
Nonetheless, these hardness results are all in the worst-case sense and there are still many intriguing positive results on sample-efficient learning of subclasses of POMDPs.
For example, \cite{guo2016pac,azizzadenesheli2016reinforcement,jin2020sample} applied spectral methods to learning undercomplete POMDPs and \cite{Liu2022when} developed the optimistic MLE approach for learning both undercomplete and overcomplete POMDPs. We refer interested readers to \cite{Liu2022when} for a thorough review of existing results on POMDPs.

In terms of algorithmic design, our  algorithms build upon the optimistic MLE methodology developed in \cite{Liu2022when}. Compared to \cite{Liu2022when}, our main algorithmic contribution lies in the design of the optimistic equilibrium computation subroutine in the self-play setting and the optimistic maximin policy design in the adversarial setting. 
In terms of analysis, our proofs requrie new techniques tailored to controlling game-theoretic regret, in addition to the OMLE guarantees imported from \cite{Liu2022when}.
For more detailed explanations of our  technical contribution, 
please refer to Section \ref{subsec:tech-novel}.

\paragraph{Imperfect-information extensive-form games} 
In the literature on game theory, there is a long history of learning  Imperfect-Information  Extensive-Form Games with perfect recall (IIEFGs), \citep[see, e.g.,][]{zinkevich2007regret,gordon2007no,farina2020faster,farina2021model,kozuno2021model} and the references therein. IIEFGs can be viewed as special cases of POMGs with \emph{tree-structured} transition and \emph{deterministic} emission (which are also known as information sets). As a result, IIEFGs can not \emph{efficiently} represent POMGs with \emph{general} transition and \emph{stochastic} emission (see Appendix \ref{sec:iiefg}), and thus sample-efficient learning results for IIEFGs does not imply sample-efficient learning of POMGs. On the other hand, we show that all IIEFGs can be \emph{efficiently} represented by $1$-weakly revealing POMGs (see Appendix \ref{sec:iiefg}). Therefore, all algorithms and theoretical results developed in this paper can immediately used to learn IIEFGs with a polynomial sample complexity.  

\paragraph{Decentralized POMDPs} 
There is another classic model for studying multi-agent partially observable RL, named decentralized POMDPs \citep[e.g.,][]{oliehoek2012decentralized,oliehoek2016concise}, which is a special subclass of POMGs where all players share a common reward target. 
Compared to general POMGs, decentralized POMDPs can only simulate cooperative relations among players, while general POMGs can model both cooperative and competitive relations. 
Besides, most works \citep[e.g.,][]{nair2003taming,bernstein2005bounded,oliehoek2008optimal,szer2012maa,dibangoye2016optimally,liu2017learning,amato2019modeling} along this direction mainly focus on the computational complexity of planning with \emph{known} models or simulators instead of the sample efficiency of learning from interactions (as in this paper), which requires learning and estimating the unknown environment while balancing the tradeoff between exploration and exploitation.

\paragraph{Interactive POMDPs}
Another related model is Interactive POMDPs (I-POMDPs) \citep{gmytrasiewicz2005framework,doshi2009graphical}, which generalizes POMDPs to handle the presence of other agents by augmenting the latent state with behavior models of other agents. Since I-POMDPs are strictly more general than POMDPs, they are also less understood in theory than POMDPs. 
Noticably, I-POMDPs can only efficiently handle the problems where the number of possible models of other agents is not too large. 
For the POMGs model considered in this paper, even in the simplest setting of two-player zero-sum games, an agent can choose from  doubly exponentially many different history-dependent strategies. As a result, I-POMDPs cannot efficiently simulate POMGs without using a latent state space that is doubly exponentially large.


\section{Preliminary} \label{sec:prelim}

In this paper, we consider Partially Observable Markov Games (POMGs) in its most generic---multiplayer general-sum form. Formally, we denote a tabular episodic POMG with $n$ players by tuple $(H, \cS, \{\cA_i\}_{i=1}^n,$\\$ \{\cO_i\}_{i=1}^n; \T,\O,\mu_1; \{r_i\}_{i=1}^n)$, where $H$ denotes the length of each episode,  $\cS$  the state space with $|\cS| = S$, 
 $\cA_i$ denotes the action space for the $i\th$ player with $|\cA_i| = A_i$. We denote by $\bm{a}:=(a_{1},\cdots,a_{n})$ the joint actions of all $n$ players, and by $\cA := \cA_1 \times \ldots \times \cA_n$ the joint action space with $|\cA|=A=\prod_i A_i$. $\P = \{\P_h\}_{h\in[H]}$ is the collection of transition matrices, so that $\P_h ( \cdot | s, \bm{a})\in\Delta_\cS$ gives the distribution of the next state if joint actions $\bm{a}$ are taken at state $s$ at step $h$. 
 $\mu_1$ denotes the distribution of the initial state $s_1$. 
 $\cO_i$ denotes the observation space for the $i^{\rm th}$ player with $|\cO_i|=O_i$.  We denote by $\o:=(o_1,\ldots,o_n)$ the joint observations of all $n$ players, and by $\cO:=\cO_1\times\ldots\times\cO_n$ with $|\cO|=O=\prod_i O_i$.
 $\O = \{\O_h\}_{h\in[H]}\subseteq \R^{O\times S}$ is the collection of joint emission matrices, so that $\O_h ( \cdot | s)\in\Delta_\cO$ gives the emission distribution over the joint observation space $\cO$  at state $s$ and step $h$. 
 Finally $r_i = \{r_{i,h}\}_{h\in[H]}$ is the collection of known reward functions for the $i^{\text{th}}$ player, so that $r_{i,h}(o_i) \in[0, 1]$ gives the deterministic reward received by the $i^{\text{th}}$ player if she observes $o_i$  at step $h$. 
 \footnote{This is equivalent to assuming the reward information is contained in the observation.} We remark that since the relation among the rewards of different players can be arbitrary, this model of POMGs  subsumes both the cooperative and the competitive settings in partially observable MARL.

 In a POMG, the states are always hidden from all players, and each player only observes \textbf{her own individual observations and actions}. That is, each player can not see the observations and actions of the other players.
 At the beginning of each episode, the environment samlpes $s_1$ from $\mu_1$. At each step $h\in[H]$, each player $i$ observes her own observation $o_{i,h}$ where $\o_h:=(o_{1,h},\ldots,o_{n,h})$ are jointly sampled from $\O_h(\cdot\mid {s_h})$. Then each player $i$ receives reward $r_{i,h}(o_{i,h})$ and picks action $a_{i,h}\in\cA_i$ simultaneously.  After that the environment transitions to the next state
 $s_{h+1}\sim\P_h(\cdot | s_h, \bm{a}_h)$ where $\a_h:=(a_{1,h},\ldots,a_{n,h})$. The current episode terminates  immediately once $s_{H+1}$ is reached.

\paragraph{Policy, value function}
To define different types of polices, we extend the conventions in \emph{fully observable} Markov games \citep{jin2021v} to the partially observable settings. A (\emph{random}) \emph{policy} $\pi_i$ of the $i^{\rm th}$ player is a map $\pi_i: \Omega \times \bigcup_{h=1}^{H}\left((\cO_i \times \cA_i)^{h-1}\times \cO_i\right) \rightarrow \cA_i$, which maps a random seed $\omega$ from space $\Omega$ and a history of length $h\in[H]$---say $\tau_{i,h} := (o_{i,1}, a_{i,1}, \cdots, o_{i,h})$, to an action in $\cA_i$. To execute policy $\pi_i$, we first draw a random sample $\omega$ at the beginning of the episode. Then, at each step $h$, the $i^{\rm th}$ player simply takes action $\pi_{i}(\omega, \tau_{i,h})$. We note here $\omega$ is shared among all steps $h \in[H]$. $\omega$ encodes both the correlation among steps and the individual randomness of each step. We further say a policy $\pi_i$ is \emph{deterministic} if $\pi_{i}(\omega, \tau_{i,h}) = \pi_{i}(\tau_{i,h})$ which is independent of the choice of $\omega$.

 By definition, a random policy is equivalent to a mixture of deterministic policies because given a fixed $\omega$ the decision of $\pi_i$ on any history is deterministic. With slight abuse of notation, we use $\pi_i(\omega,\cdot)$ to refer to the deterministic policy realized by policy $\pi_i$ and a fixed $\omega$.
 We denote the set of all policies of the $i^{\rm th}$ player by $\Pi_i$ and the set of all deterministic ones by $\Pi_i^{\rm det}$.

A \emph{joint (potentially correlated) policy} is a set of policies $\{\pi_i\}_{i=1}^n$, where the same random seed $\omega$ is shared among all agents, which we denote as $\pi = \pi_1 \odot \pi_2 \odot \ldots \odot \pi_n$. We also denote $\pi_{-i} = \pi_1 \odot \ldots \pi_{i-1} \odot \pi_{i+1} \odot \ldots \odot \pi_n$ to be the joint policy excluding the $i\th$ player. A special case of joint policy is the \emph{product policy} where the random seed has special form $\omega = (\omega_1, \ldots, \omega_n)$, and for any $i\in [n]$, $\pi_i$ only uses the randomness in $\omega_i$, which is independent of remaining $\{\omega_j\}_{j\neq i}$, which we denote as $\pi = \pi_1 \times \pi_2 \times \ldots \times \pi_n$.

We define the value function $V^\pi_{i}$ as the expected cumulative reward that the $i^\text{th}$ player will receive if all players follow joint policy $\pi$:
\begin{equation} \label{eq:value_general_policy}
\textstyle V^{\pi}_{i}\defeq \E_{\pi}\left[\sum_{h =
        1}^H r_{i,h}(o_{i,h}) \right].
\end{equation}
where the expectation is taken over the randomness in the initial state, the transitions, the emissions, and the random seed  $\omega$ in policy $\pi$.

\paragraph{Best response and strategy modification}
For any strategy $\pi_{-i}$, the \emph{best response} of the $i^{\rm th}$ player is defined as a policy of the $i^\text{th}$ player, which is independent of the randomness in $\pi_{-i}$ and achieves the highest value for herself conditioned on all other players deploying $\pi_{-i}$. 
Formally, the best response is the maximizer of $\max_{\pi'_i} V_{i}^{\pi'_i \times \pi_{-i}}$ whose value we also denote as $V_{i}^{\dagger, \pi_{-i}}$ for simplicity. By its definition, we know the best response can always be achieved by \emph{deterministic} policies.

A \emph{strategy modification} for the $i^{\rm th}$ player is a map $\phi_i: \Pi_i^{\rm det} \rightarrow \Pi_i^{\rm det}$,
which maps a deterministic policy in $\Pi_i^{\rm det}$  to another one in it. For any such strategy modification $\phi_i$, we can naturally extend its domain and image to include random policies, i.e., define its extension $\phi_i: \Pi_i \rightarrow \Pi_i$  as follows: by definition, a random policy $\pi$ can be expressed as a mixture of deterministic policies, i.e., as $\pi(\omega, \cdot)$ (a deterministic policy for a fixed $\omega$) with a distribution over $\omega$. Then if we apply map $\phi_i$ on random policy $\pi$, we can define the resulting random policy (denoted as $\phi_i \diamond \pi_i$) as $\phi_i(\pi_i(\omega,\cdot))$ (again a deterministic policy for a fixed $\omega$) with the same distribution over $\omega$.
For any joint policy $\pi$, we define the best strategy modification of the $i^\text{th}$ player as the maximizer of $\max_{\phi_i} V_{i}^{(\phi_i \diamond \pi_i) \odot \pi_{-i}}$. 

Different from the best response, which is completely independent of the randomness in $\pi_{-i}$, the best strategy modification changes the policy of the $i\th$ player while still utilizing the shared randomness among $\pi_i$ and $\pi_{-i}$. Therefore, the best strategy modification is more powerful than the best response: formally one can show that 
$\max_{\phi_i} V_{i}^{(\phi_i \diamond \pi_i) \odot \pi_{-i}} \ge \max_{\pi'_i} V_{i}^{\pi'_i \times \pi_{-i}}$ for any policy $\pi$.

\subsection{Learning objectives}

We focus on three classic equilibrium concepts in game theory---Nash Equilibrium, Correlated Equilibrium (CE) and Coarse Correlated Equilibrium (CCE).
First, a Nash equilibrium is defined as a product policy in which no player can increase her value by changing only her own policy. Formally,
\begin{definition}[Nash Equilibrium]\label{def:NE}
A \emph{product} policy $\pi$ is a \textbf{ Nash equilibrium} if $V_{i}^{\dag, \pi_{-i}}=V_{i}^{\pi }$ for all $i\in[n]$. A \emph{product} policy $\pi$ is an $\epsilon$-approximate Nash equilibrium if $ V_{i}^{\dagger, \pi_{-i}}\le V_{i}^{\pi }+\epsilon$ for all $i\in[n]$.
\end{definition}
The Nash-regret of a sequence of product policies is the cumulative violation of the Nash condition. 
\begin{definition}[Nash-regret]
Let $\pi^k$ denote the (product) policy deployed by an algorithm in the $k^\text{th}$ episode. After a total of $K$ episodes, the Nash-regret is defined as
$$\textstyle
\reg_{\rm Nash}(K)= \sum_{k=1}^K \max_{i\in[n]}{( V_{i}^{\dag, \pi_{-i}^k}-V_{i}^{\pi^k } )}.
$$
\end{definition}

Second, a coarse correlated equilibrium is defined as a joint (potentially correlated) policy where no player can increase her value by unilaterally changing her own policy. Formally,
\begin{definition}[Coarse Correlated Equilibrium] \label{def:CCE}
    A \emph{joint} policy $\pi$ is a \textbf{CCE} if $V_{i}^{\dag, \pi_{-i}}\le V_{i}^{\pi }$ for all $i\in[n]$. A \emph{joint} policy $\pi$ is an $\epsilon$-approximate CCE if $ V_{i}^{\dagger, \pi_{-i}}\le V_{i}^{\pi }+\epsilon$ for all $i\in[n]$.
\end{definition}
The only difference between Definition \ref{def:NE} and Definition \ref{def:CCE} is that a Nash equilibrium  has to be a product policy while a CCE  can be correlated. Therefore,  CCE is a relaxed notion of Nash equilibrium, and a Nash equilibrium is always a CCE. Similarly, we can define the CCE-regret for a sequence of potentially correlated policies as  the cumulative vilolation of the CCE condition. 
\begin{definition}[CCE-regret]
    Let $\pi^k$ denote the  policy deployed by an algorithm in the $k^\text{th}$ episode. After a total of $K$ episodes, the CCE-regret is defined as
    $$\textstyle
    \reg_{\rm CCE}(K)= \sum_{k=1}^K \max_{i\in[n]}{( V_{i}^{\dag, \pi_{-i}^k}-V_{i}^{\pi^k } )}.
    $$
    \end{definition}

Finally, a correlated equilibrium is defined as a joint (potentially correlated) policy where no player can increase her value by unilaterally applying any strategy modification. Formally,
\begin{definition}[Correlated Equilibrium] \label{def:CE}
A joint policy $\pi$ is a \textbf{CE} if $\max_{\phi_i} V_{i}^{(\phi_i \diamond \pi_i) \odot \pi_{-i}}=V_{i}^{\pi }$ for all $i\in[n]$. A joint policy $\pi$ is an  $\epsilon$-approximate CE if $\max_{\phi_i} V_{i}^{(\phi_i \diamond \pi_i) \odot \pi_{-i}}\le V_{i}^{\pi } + \epsilon$ for all $i\in[m]$.
\end{definition}
In Partially Observable Markov games, we always have that a Nash equilibrium is a CE, and a CE is a CCE. 
Finally, we define the CE-regret to be the cumulative violation of the CE condition. 
\begin{definition}[CE-regret]
    Let $\pi^k$ denote the  policy deployed by an algorithm in the $k^\text{th}$ episode. After a total of $K$ episodes, the CE-regret is defined as
    $$\textstyle
    \reg_{\rm CE}(K)= \sum_{k=1}^K \max_{i\in[n]}
    \max_{\phi_i} {( V_{i}^{(\phi_i \diamond \pi_i^k) \odot \pi_{-i}^k}-V_{i}^{\pi^k } )}.
    $$
    \end{definition}


\section{Weakly Revealing Partially Observable Markov Games}
\label{sec:weakly}
In this section, we define the class of weakly revealing POMGs. 
To begin with, we consider undercomplete POMGs where there are more observations than hidden states, i.e., $O\ge S$.  Formally, the family of $\alpha$-weakly revealing POMGs  includes all POMGs, in which the $S^\text{th}$ singular value of each emission matrix  $\O_h$ is lower bounded by $\alpha>0$.
\begin{assumption}[$\alpha$-weakly revealing condition]\label{asp:under}
   There exists $\alpha>0$, such that $\min_h \sigma_{S}(\O_h)\ge\alpha$.
\end{assumption}
Assumption \ref{asp:under} is simply a robust version of the condition that  the rank of each emission matrix is  $S$, which  guarantees that no two different latent state mixtures can generate the same observation distribution, i.e., $\O_h\nu_1 \neq \O_h\nu_2$ for any different $\nu_1,\nu_2\in\Delta_\cS$. Intuitively, this guarantees that the \textbf{joint observations} of all agents contain sufficient information to distinguish any two different state mixtures. We remark that this is much weaker than requiring the individual observations of each agent contain sufficient information about the latent states. The weakly revealing condition is important in  excluding those pathological POMGs where the observations contain no useful information for identifying the key parts of model dynamics.

Note that Assumption \ref{asp:under} never holds in the overcomplete setting ($S>O$) as it is impossible to distinguish any two latent state mixtures by  only inspecting the observation distribution in  a single step. To address this issue, we can instead inspect the observations for $m$-consecutive steps. 
To proceed, we 
define the $m$-step emission-action matrices 
\[
\{\M_h\in\R^{(A^{m-1}O^m)\times S}\}_{h\in[H-m+1]}
\]
as follows: 
Given an observation sequence $\vv{\o}$ of length $m$, initial state $s$ and action sequence $\vv\a$ of length $m-1$,
we let $[\M_h]_{(\vv\a,\vv\o),s}$ be the probability of receiving  $\vv\o$ provided that the action sequence $\vv\a$ is used from state $s$ and step $h$: 
 \begin{equation}\label{defn:M}\textstyle
     [\M_h]_{(\vv\a,\vv\o),s}= \P(o_{h:h+m-1}=\vv\o \mid 
     s_h=s,a_{h:h+m-2} =\vv\a ),\quad  \forall(\vv\a,\vv\o,s)\in \cA^{m-1}\times\cO^m\times\cS.
 \end{equation}

Similar to the undercomplete setting, 
 the weakly-revealing condition in the over-complete setting simply assumes the $S^\text{th}$ singular value of each $m$-step emission-action matrix  is lower bounded.
\begin{assumption}[multistep $\alpha$-weakly revealing condition]\label{asp:over}
    There exists $m\in\N$, $\alpha>0$ such that   $\min_h\sigma_{S}(\M_h)\ge\alpha$ where $\M_h$ is the $m$-step emission matrix defined in \eqref{defn:M}.
\end{assumption}

Assumption \ref{asp:over} ensures that  $m$-step consecutive observations shall contain sufficient information to distinguish any two different latent state mixtures. Note that Assumption \ref{asp:under} is a special case of Assumption \ref{asp:over} with $m=1$. 
Finally, we remark that the single-agent versions of Assumption \ref{asp:under} and  \ref{asp:over} were first identified in \cite{jin2020sample} and \cite{Liu2022when} as sufficient conditions for sample-efficient learning of single-step and multi-step weakly revealing POMDPs (the single-agent version of POMGs), respectively.

\section{Learning Equilibria with Self-play}
In this section, we study the self-play setting where the algorithm can control all the players to learn the equilibria by playing against itself. We propose a new algorithm --- \emph{Optimistic Maximum Likelihood Estimation for Learning Equilibria} (\algName) that can provably find Nash equilibria, coarse correlated equilibria and correlate equilibria in any weakly-revealing partially observable Markov games using a number of samples polynomial in all relevant parameters.

\subsection{Undercomplete partially observable Markov games}

\begin{algorithm}[t]
    \caption{\algName}
 \begin{algorithmic}[1]\label{alg:under}
\STATE \textbf{Initialize:} $\cB^1 = \{ \hat \theta\in\Theta:~\min_h\sigma_{S}(\hat\O_h)\ge \alpha\}$, $\cD=\{\}$ 
\FOR{$k=1,\ldots,K$}
\STATE compute $\pi^k=$\textsf{Optimistic\_Equilibrium}$(\cB^k)$   \label{alg1-1}
    \STATE  follow $\pi^k$ to collect a trajectory $\tau^k=(\o_1^k,\a_1^k,\ldots,\o_H^k,\a_H^k)$ \label{alg1-2}
    \STATE add $(\pi^k,\tau^k)$ into $\cD$  and update \vspace{-3mm}
    $$
    \cB^{k+1} = \bigg\{\hat\theta \in \Theta: \sum_{(\pi,\tau)\in\cD} \log \P_{{\hat\theta}}^{\pi} (\tau)
    \ge \max_{ \theta' \in\Theta} \sum_{(\pi,\tau)\in\cD} \log \P^{\pi}_{{\theta'}}(\tau) -\beta \bigg\} \bigcap \cB^1$$\label{alg1-3}
    \vspace{-3mm}
    \ENDFOR
    \STATE output $\pi^{\rm out}$ that is sampled uniformly at random from $\{\pi^k\}_{k\in[K]}$
 \end{algorithmic}
 \end{algorithm}

\begin{subroutine}[t]
    \caption{\textsf{Optimistic\_Equilibrium}$(\cB)$}
    \label{subalg}
    \begin{algorithmic}[1]
        \FOR{$i\in[n]$}\label{alg0-1}
\STATE let $\up{V}_i\in\R^{|\Pidet_1|\times\cdots\times|\Pidet_n|}$  
with its $\pi^{\rm th}$  entry equal to $\sup_{\hat\theta\in\cB} V_{i}^\pi(\hat\theta)$ for $\pi\in\Pi_1\times\cdots\times\Pi_n$
\ENDFOR\label{alg0-2}
\STATE return $\textsc{Equilibrium}(\up{V}_1,\ldots,\up{V}_n)$
 \label{alg0-3}
    \end{algorithmic}
\end{subroutine}

We first present the algorithm and results for learning undercomplete POMGs under Assumption~\ref{asp:under}. 
We will see in the later section that 
with a minor modification the same  algorithm also applies to learning overcomplete POMGs under Assumption~\ref{asp:over}.

\paragraph{Algorithm description} 
To condense notations, we will use $\theta=(\T,\O,\mu_1)$ to denote the parameters of a POMG. Given a policy $\pi$ and a trajectory $\tau$, we denote by $V^\pi_{i}(\theta)$ the $i^{\rm th}$ player's value and by $\P^\pi_\theta(\tau)$  the probability of observing trajectory $\tau$, both under policy $\pi$ in the POMG model  parameterized by $\theta$. 
We describe \algName~in Algorithm \ref{alg:under}. 
In each episode, the algorithm executes the following two key steps: 
\begin{itemize}
    \item \textbf{Optimistic equilibrium computation} (Line \ref{alg1-1})  
    We first invoke \optequi\\ 
    (Subroutine \ref{subalg}) with confidence set  $\cB^k$ to compute a joint (potentially correlated) policy $\pi^k$. Formally, subroutine \optequi$(\cB^k)$ consists of two components:
    \begin{itemize}
        \item \textbf{Optimistic value estimation} (Line \ref{alg0-1}-\ref{alg0-2} of Subroutine \ref{subalg})
        For each player $i\in[n]$ and  deterministic joint policy $\pi\in\Pidet_1\times\cdots\times\Pidet_n$, we compute an upper bound $\up{V}_i^\pi$ for the $i^{\rm th}$ player's value  under policy  $\pi$ by using the most optimistic POMG  model in the  confidence set $\cB^k$.

    \item \textbf{Equilibria computation} (Line \ref{alg0-3} of Subroutine \ref{subalg}) Given the optimistic value estimates for all deterministic  joint policies and all players, we can view the POMG as a normal-form game where  the $i^{\rm th}$ player's pure strategies consist of all her deterministic policies (i.e.,  $\Pidet_i$) and the payoff she receives under a joint deterministic policy $\pi\in\Pidet_1\times\cdots\times\Pidet_n$ is equal to the corresponding optimistic value estimate $\up{V}_i^\pi$. 
    Then we compute a \textsc{Equilibrium} $\pi^k$  for this normal-form game,  which is a mixture of all the deterministic joint policies in $\Pidet_1\times\cdots\times\Pidet_n$.
\end{itemize}

    \item \textbf{Confidence set update} (Line \ref{alg1-2}-\ref{alg1-3}) We first follow $\pi^k$ to collect a trajectory, and then  utilize the newly collected data to update the model confidence set via MLE principle. 
\end{itemize}
Here we highlight two algorithmic designs in \algName: the flexibility of equilibrium computation and the MLE confidence set construction.
In the step of equilibrium computation, we can choose \textsc{Equilibrium} to be  Nash equilibrium or correlated equilibrium (CE) or coarse correlated equilibrium (CCE) of the normal-form game depending on the target type of  equilibrium we aim to learn for the POMG. 
With regard to the confidence set design, we adopt the idea from  \cite{Liu2022when} to include all the POMG models whose likelihood on the historical data is close to the maximum likelihood. This can be viewed as a relaxation of the classic MLE method, with  the degree of relaxation controlled by parameter $\beta$. One important benefit of this relaxation is that although the groundtruth POMG model is in general not a solution of MLE, its likelihood ratio is rather close to the maximal likelihood. By doing so, we can guarantee the true model is included in the confidence set with high probability. Finally, we remark that Algorithm \ref{alg:under} is computationally inefficient in general due to the steps of optimistic value estimation and equilibrium computation.

\paragraph{Theoretical guarantees} Below we present the main theorem for \algName.

\begin{theorem}\label{thm:under}
\emph{(Regret of \algName)}
    Under Assumption \ref{asp:under},
	there exists an absolute constant $c$ such that for any $\delta\in(0,1]$ and $K\in\N$, Algorithm \ref{alg:under} with $\beta = c\left(H(S^2A+SO)\log(SAOHK)+\log(K/\delta)\right)$ and \textsc{Equilibrium} being one of $\{\text{Nash, CCE, CE}\}$ satisfies (respectively) that  
	with probability at least $1-\delta$, 
	$$\textstyle \reg_{\{\rm Nash,CCE,CE\}}(k) \le
    \poly(S,A,O,H,\alpha^{-1},\log(K\delta^{-1}))\cdot\sqrt{k} \qquad \text{ for all }k\in[K].$$
\end{theorem}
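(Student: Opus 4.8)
The plan is to combine the optimistic MLE machinery (imported from \cite{Liu2022when}) with a new game-theoretic regret decomposition that is uniform across the three equilibrium notions. Throughout, let $\theta^*$ denote the true POMG and write $r_i(\tau)=\sum_h r_{i,h}(o_{i,h})\in[0,H]$, so that for any fixed policy $\pi$ and any two models, $|V_i^\pi(\hat\theta)-V_i^\pi(\theta^*)|\le H\|\P_{\hat\theta}^\pi-\P_{\theta^*}^\pi\|_1$, where $\|\cdot\|_1$ denotes the $\ell_1$ distance between trajectory distributions.

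First I would establish the two OMLE guarantees that the choice of $\beta$ buys. Via a standard MLE/martingale concentration argument over the model class $\Theta$ (whose log-bracketing number scales as $H(S^2A+SO)\log(\cdot)$, accounting for the $HS^2A$ transition and $HSO$ emission parameters, plus a $\log(K/\delta)$ union bound), with probability at least $1-\delta$ the true model satisfies $\theta^*\in\cB^k$ for all $k\in[K]$, and moreover the in-sample prediction error $\sum_{t<k}\|\P_{\hat\theta}^{\pi^t}-\P_{\theta^*}^{\pi^t}\|_1^2$ is $O(\beta)$ uniformly over $\hat\theta\in\cB^k$. The immediate and crucial consequence of $\theta^*\in\cB^k$ is \emph{optimism}: since each payoff entry is $\up{V}_i^\pi=\sup_{\hat\theta\in\cB^k}V_i^\pi(\hat\theta)$, we get $\up{V}_i^\pi\ge V_i^\pi(\theta^*)$ for every player $i$ and every deterministic joint policy $\pi$.

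Next comes the main step---the game-theoretic decomposition, which I expect to be the principal obstacle and the chief departure from the single-agent analysis. Fix an episode $k$ and let $\pi^k$ be the equilibrium returned by \optequi; since values are linear in the policy mixture, its per-player optimistic payoff is $U_i^k:=\E_{\pi\sim\pi^k}[\up{V}_i^\pi]$ and its true value is $V_i^{\pi^k}(\theta^*)=\E_{\pi\sim\pi^k}[V_i^\pi(\theta^*)]$. For the CCE/Nash case I would argue, for each $i$, that by optimism $V_i^{\dag,\pi_{-i}^k}(\theta^*)=\max_{\pi_i'}V_i^{\pi_i'\times\pi_{-i}^k}(\theta^*)\le\max_{\pi_i'}\sup_{\hat\theta\in\cB^k}V_i^{\pi_i'\times\pi_{-i}^k}(\hat\theta)$; that $\sup_{\hat\theta\in\cB^k}V_i^{\pi_i'\times\pi_{-i}^k}(\hat\theta)\le\E_{\pi_{-i}\sim\pi_{-i}^k}[\up{V}_i^{\pi_i'\times\pi_{-i}}]$ because the supremum over $\cB^k$ only increases when pushed inside the mixture over $\pi_{-i}$ (Jensen); and finally that $\max_{\pi_i'}\E_{\pi_{-i}\sim\pi_{-i}^k}[\up{V}_i^{\pi_i'\times\pi_{-i}}]\le U_i^k$ by the CCE defining inequality of $\pi^k$ in the optimistic normal-form game. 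The CE case is identical, with best responses replaced by strategy modifications $\phi_i$ and the CCE inequality replaced by the CE inequality. In all three cases this collapses the per-episode regret to the \emph{optimism gap}
\[\max_i\big(V_i^{\dag,\pi_{-i}^k}(\theta^*)-V_i^{\pi^k}(\theta^*)\big)\le\max_i\big(U_i^k-V_i^{\pi^k}(\theta^*)\big)=\max_i\E_{\pi\sim\pi^k}\big[\up{V}_i^\pi-V_i^\pi(\theta^*)\big].\]

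Finally I would bound the cumulative optimism gap. Using the value-difference inequality above, $\up{V}_i^\pi-V_i^\pi(\theta^*)\le H\sup_{\hat\theta\in\cB^k}\|\P_{\hat\theta}^\pi-\P_{\theta^*}^\pi\|_1$, whose right-hand side is independent of $i$, so $\reg(k)\le H\sum_{t\le k}\E_{\pi\sim\pi^t}[\sup_{\hat\theta\in\cB^t}\|\P_{\hat\theta}^\pi-\P_{\theta^*}^\pi\|_1]$. The last sum is controlled by the weakly-revealing OMLE structural (eluder-type) guarantee: the in-sample bound from the first step, together with Assumption \ref{asp:under}, which via the left inverse of $\O_h$ (well-conditioned because $\sigma_S(\O_h)\ge\alpha$) converts observation-space discrepancies into latent-state/belief discrepancies, yields
\[\sum_{t\le k}\E_{\pi\sim\pi^t}\Big[\sup_{\hat\theta\in\cB^t}\|\P_{\hat\theta}^\pi-\P_{\theta^*}^\pi\|_1\Big]\le\poly(S,A,O,H,\alpha^{-1},\log(K\delta^{-1}))\sqrt{k},\]
uniformly in $k$. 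Combining the three steps proves the claim for all $k\in[K]$ simultaneously. The subtle point I would watch most carefully is that the optimistic value tensor uses a \emph{separate} maximizing model $\theta_{i,\pi}^k$ for each deterministic $\pi$ in the support of $\pi^k$; hence the structural guarantee must be invoked in its expected-over-support form $\E_{\pi\sim\pi^k}[\sup_{\hat\theta}\|\cdot\|_1]$ rather than merely for the executed mixed policy, which is exactly where adapting---rather than directly quoting---the single-agent OMLE bound is required.
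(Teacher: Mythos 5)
Your overall route is the same as the paper's: establish $\theta^\star\in\cB^k$ and an OMLE-type cumulative TV-distance bound, use optimism plus the equilibrium property of $\pi^k$ in the optimistic normal-form game to collapse the per-episode \{Nash, CCE, CE\}-regret to the optimism gap $\max_i\E_{\mu\sim\pi^k}[\up{V}_i^{k,\mu}-V_i^\mu(\theta^\star)]$, and then convert value gaps to $H\,\sup_{\hat\theta\in\cB^k}\|\P^{\mu}_{\hat\theta}-\P^{\mu}_{\theta^\star}\|_1$. The game-theoretic decomposition (including the Jensen step pushing $\sup_{\hat\theta}$ inside the mixture and the CE variant via strategy modifications) matches the paper's argument essentially line by line.

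The one place where your proposal is genuinely incomplete is exactly the point you flag at the end and then leave unresolved: you need to bound $\sum_{t\le k}\E_{\mu\sim\pi^t}\bigl[\sup_{\hat\theta\in\cB^t}\|\P^{\mu}_{\hat\theta}-\P^{\mu}_{\theta^\star}\|_1\bigr]$, i.e.\ the error in expectation over the \emph{support} of each mixed policy, whereas the imported single-agent OMLE guarantee only controls $\sum_{t\le k}\|\P^{\mu^t}_{\theta^t}-\P^{\mu^t}_{\theta^\star}\|_1$ for the policies that actually generated the data. Proposing to ``adapt'' the OMLE bound to an expected-over-support form is not needed and would require reopening the eluder-type argument. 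The paper instead closes this gap with an elementary two-part trick: (i) it rewrites the algorithm equivalently so that a deterministic $\mu^k\sim\pi^k$ is sampled explicitly and recorded in $\cD$ (noting the confidence set is unaffected because the policy-dependent terms in the log-likelihood cancel on both sides of the defining inequality); and (ii) it applies Azuma--Hoeffding to the bounded martingale differences $\E_{\mu\sim\pi^k}[f_k(\mu)]-f_k(\mu^k)$ with $f_k(\mu)=\max_i(\up{V}_i^{k,\mu}-V_i^\mu)\in[0,H]$ and $\cB^k$ measurable with respect to the past, which replaces the expectation by the realized $\mu^k$ at an additional cost of $\tilde{\cO}(H\sqrt{K})$. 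After that the imported bound applies verbatim (its ``for all $\theta^1\in\cB^1,\dots,\theta^K\in\cB^K$'' quantifier absorbs the inner supremum over $\hat\theta\in\cB^k$). You should add this martingale step; without it the final invocation of the OMLE guarantee does not apply to the quantity you have.
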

Theorem \ref{thm:under} claims that if all players follow \algName, then the cumulative \{Nash,CCE,CE\}-regret  is upper bounded by $\tilde{\cO}(\sqrt{k})$ for any weakly-revealing POMGs that satisfy Assumption \ref{asp:under}, where the growth rate w.r.t $k$ is optimal.   By the standard online-to-batch conversion, it directly implies the following sample complexity result: 
\begin{corollary}\emph{(Sample Complexity of \algName)}\label{cor:sample-under}
    Under the same setting as Theorem \ref{thm:under}, when
     $K \ge  \poly(S,A,O,H,\alpha^{-1},\log(\epsilon^{-1}\delta^{-1}))\cdot\epsilon^{-2}$, 
     then with probability at least $1/2$, $\pi^{\rm out}$  is an $\epsilon$-$\{\text{Nash, CCE, CE}\}$ policy.
    \end{corollary}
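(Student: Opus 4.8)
The heavy lifting is in Theorem~\ref{thm:under}; Corollary~\ref{cor:sample-under} then follows by a standard online-to-batch conversion, so I would spend most of the effort on the regret bound and organize it around three pillars: (i) an MLE concentration guarantee placing the true model $\theta^\star$ in every confidence set $\cB^k$; (ii) optimism of the value estimates produced by Subroutine~\ref{subalg}; and (iii) a game-theoretic decomposition that reduces the \{Nash, CCE, CE\}-regret to a single cumulative on-policy estimation error, which is finally controlled by an elliptical-potential argument exploiting the weakly-revealing condition.

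First I would adapt the MLE concentration lemma of \cite{Liu2022when} to the slack-$\beta$ confidence set of Line~\ref{alg1-3}. With the stated $\beta=c(H(S^2A+SO)\log(SAOHK)+\log(K/\delta))$---whose bracket is the log-covering number of tabular POMG models (roughly $S^2A$ transition and $SO$ emission free parameters per step across $H$ steps) plus a union bound over the $K$ episodes---the event $\{\theta^\star\in\cB^k~\forall k\in[K]\}$ holds with probability at least $1-\delta$, together with the in-sample bound $\sum_{j<k}\mathrm{TV}^2(\P^{\pi^j}_{\hat\theta},\P^{\pi^j}_{\theta^\star})\lesssim\beta$ uniformly over $\hat\theta\in\cB^k$. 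On this event optimism is immediate: since the per-policy estimate is $\up{V}_i^\pi=\sup_{\hat\theta\in\cB^k}V_i^\pi(\hat\theta)$ and $\theta^\star\in\cB^k$, we get $\up{V}_i^\pi\ge V_i^\pi(\theta^\star)$ for every deterministic joint $\pi$ and every $i$, which extends to mixed, correlated, and strategy-modified policies by linearity of the value in the mixing distribution.

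The genuinely new, multi-agent ingredient is the regret decomposition. Treat the optimistic estimates $\{\up{V}_i^\pi\}$ as payoffs of a normal-form game whose pure strategies are the deterministic policies $\Pidet_i$; such a game always admits a Nash/CCE/CE, and because values mix linearly, the output $\pi^k=\textsc{Equilibrium}(\up{V}_1,\dots,\up{V}_n)$ is exactly an equilibrium of it. For CCE, optimism gives $V_i^{\dagger,\pi_{-i}^k}=\max_{\pi_i'}V_i^{\pi_i'\times\pi_{-i}^k}(\theta^\star)\le\max_{\pi_i'}\up{V}_i^{\pi_i'\times\pi_{-i}^k}$, while the CCE property in the optimistic game gives $\max_{\pi_i'}\up{V}_i^{\pi_i'\times\pi_{-i}^k}\le\up{V}_i^{\pi^k}$; chaining these yields $V_i^{\dagger,\pi_{-i}^k}-V_i^{\pi^k}\le\up{V}_i^{\pi^k}-V_i^{\pi^k}$. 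The Nash case is identical (a Nash equilibrium is a CCE and $\pi^k$ is a product policy), and the CE case is the same with the deviation $\max_{\pi_i'}$ replaced by $\max_{\phi_i}$ over strategy modifications. Summing over episodes, all three regrets are bounded by the same on-policy error $\sum_k \max_i(\up{V}_i^{\pi^k}-V_i^{\pi^k})$.

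The main obstacle is bounding this cumulative on-policy error by $\poly(\cdot)\sqrt{k}$. Writing $\up{V}_i^{\pi^k}-V_i^{\pi^k}=\E_{\pi\sim\pi^k}[\sup_{\hat\theta\in\cB^k}(V_i^\pi(\hat\theta)-V_i^\pi(\theta^\star))]\le H\,\E_{\pi\sim\pi^k}[\sup_{\hat\theta\in\cB^k}\mathrm{TV}(\P^\pi_{\hat\theta},\P^\pi_{\theta^\star})]$, two difficulties surface: the supremum is taken per realized deterministic policy (so it does not collapse to a single mixed-policy TV), and, more fundamentally, in a POMG closeness of observable trajectory distributions does not by itself imply closeness of values. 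Both are handled by importing the observable-operator machinery of \cite{Liu2022when}: the weakly-revealing condition (Assumption~\ref{asp:under}, $\sigma_S(\O_h)\ge\alpha$) lets one express the trajectory distribution through observable operators whose norms are controlled by $\alpha^{-1}$, which bounds the per-policy TV by an occupancy-weighted sum of one-step prediction errors depending only on $\hat\theta$ and $\theta^\star$; an elliptical-potential / generalized-eluder argument then converts the in-sample bound $\sum_{j<k}\mathrm{TV}^2\lesssim\beta$ into $\sum_k\max_i(\up{V}_i^{\pi^k}-V_i^{\pi^k})\le\poly(S,A,O,H,\alpha^{-1})\sqrt{k}$. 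Finally, for Corollary~\ref{cor:sample-under}, online-to-batch gives $\E_{\pi^{\rm out}}[\text{gap}(\pi^{\rm out})]=\reg(K)/K\le\poly(\cdot)/\sqrt{K}$ on the high-probability event; taking $K\ge\poly(\cdot)\eps^{-2}$ and applying Markov's inequality to the uniformly sampled output (with constants chosen so the concentration and sampling failure probabilities each stay below a fixed constant) shows $\pi^{\rm out}$ is an $\eps$-\{Nash, CCE, CE\} policy with probability at least $1/2$.
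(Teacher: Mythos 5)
Your argument is correct and takes essentially the same route as the paper: the corollary is obtained from the regret bound of Theorem~\ref{thm:under} by the standard online-to-batch conversion together with Markov's inequality applied to the nonnegative optimistic on-policy gaps $\max_i(\up{V}_i^{k,\pi^k}-V_i^{\pi^k})$. Your reconstruction of the underlying regret bound also mirrors the paper's own three-step proof (importing the MLE confidence-set and cumulative total-variation guarantees from \cite{Liu2022when}, optimism of Subroutine~\ref{subalg}, and the same game-theoretic decomposition), so there is nothing substantively different to compare.
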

Here  the dependence on the precision parameter $\epsilon$ is optimal. 
Finally, notice that the upper bound in Theorem \ref{thm:under} depends polynomially on the inverse of $\alpha$ --- a lower bound for the minimal singular value of the joint emission matrix $\O_h$ in Assumption \ref{asp:under}. This dependence is shown to be unavoidable even in the single-player setting (POMDPs) \citep{Liu2022when}.

\subsection{Overcomplete partially observable Markov games}

In this subsection, we extend \algName~to the more challenging setting of learning overcomplete POMGs, where there can be much less observations than latent states. We prove that a simple variant of \algName~still enjoys polynomial sample-efficiency guarantee for learning any multi-step weakly revealing POMGs.

\begin{algorithm}[t]
    \caption{\algNameMulti}
 \begin{algorithmic}[1]\label{alg:over}
\STATE \textbf{Initialize:} $\cB^1 = \{ \hat \theta\in\Theta:~\min_h\sigma_{S}(\hat\M_h)\ge \alpha\}$, $\cD=\{\}$ 
\FOR{$k=1,\ldots,K$}
\STATE compute $\pi^k=$\textsf{Optimistic\_Equilibrium}$(\cB^k)$  
 \label{alg2-3}
    \FOR{$h=0,\ldots,H-m$} \label{alg2-4}
    \STATE execute policy $\pi^k_{1:h}\circ \text{uniform}(\mathcal{\cA})$  to collect a trajectory $\tau^{k,h}$ \\
    then add  $(\pi^k_{1:h}\circ \text{uniform}(\mathcal{\cA}),\tau^{k,h})$ into $\cD$ 
   \ENDFOR\label{alg2-5}
    \STATE update \vspace{-4mm}
    $$
    \cB^{k+1} = \bigg\{\hat\theta \in \Theta: \sum_{(\pi,\tau)\in\cD} \log \P_{{\hat\theta}}^{\pi} (\tau)
    \ge \max_{ \theta' \in\Theta} \sum_{(\pi,\tau)\in\cD} \log \P^{\pi}_{{\theta'}}(\tau) -\beta \bigg\} \bigcap \cB^1
    $$ \vspace{-3mm}
    \ENDFOR
    \STATE output $\pi^{\rm out}$ that is sampled uniformly at random from $\{\pi^k\}_{k\in[K]}$
 \end{algorithmic}
 \end{algorithm}

\paragraph{Algorithm description} 
We describe the multi-step generalization of \algName~in Algorithm \ref{alg:over}, which inherits the key designs from Algorithm \ref{alg:under} and additionally makes two important modifications to address the challenge of insufficient information from single-step observation. 
The first change is to utilize a more active sampling strategy for exploration. Instead of simply following the optimistic policy $\pi^k$, we will iteratively execute  $H-m+1$  policies of form $\pi^k_{1:h}\circ \text{uniform}(\fA)$  where the players first follow policy $\pi^k$ from step $1$ to step $h$, then pick actions uniformly at random to  finish the remaining $H-h$ steps.
Intuitively, by actively trying random action sequences after executing policy $\pi^k$,  the algorithm can acquire more information about the system dynamics corresponding to those latent states that are frequently visited by $\pi^k$, and therefore help address the challenge of lacking sufficient  information from single-step observation.
The second change made by Algorithm \ref{alg:over} is that in constructing the confidence set, we require the minimal singular value of the multistep emission matrix to be lower bounded, which enforces the multistep weakly revealing condition in Assumption \ref{asp:over}.

\paragraph{Theoretical guarantee} Below we present the main theorem for \algNameMulti.

\begin{theorem}
    \emph{(Total suboptimality of \algNameMulti)}
    \label{thm:over}
    Under Assumption \ref{asp:over},
	there exists an absolute constant $c$ such that for any $\delta\in(0,1]$ and $K\in\N$, Algorithm \ref{alg:over} with 
	$$\beta = c\left(H(S^2A+SO)\log(SAOHK)+\log(K/\delta)\right)$$ and \textsc{Equilibrium} being one of $\{\text{Nash, CCE, CE}\}$ satisfies (respectively) that  
	with probability at least $1-\delta$, 
	$$\textstyle \reg_{\{\rm Nash,CCE,CE\}}(k) \le  \poly(S,A^m,O,H,\alpha^{-1},\log(K\delta^{-1}))\cdot\sqrt{k} \qquad \text{ for all }k\in[K],$$
    where the regret is computed for policy $\pi^1,\ldots,\pi^k$.
\end{theorem}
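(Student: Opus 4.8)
The plan is to mirror the proof of Theorem \ref{thm:under} and then patch in the two features that distinguish Algorithm \ref{alg:over} from Algorithm \ref{alg:under}: the active exploration through the uniform-action tails $\pi^k_{1:h}\circ\text{uniform}(\cA)$ and the confidence set built from the $m$-step emission-action matrices $\M_h$. The argument rests on three pillars—optimism, a game-theoretic regret decomposition, and a cumulative OMLE error bound—of which only the last genuinely changes in the overcomplete case. For optimism, I would first invoke the OMLE concentration lemma of \cite{Liu2022when}: with the stated choice of $\beta$, a martingale/bracketing control of the log-likelihood ratio of $\theta^*$ against the maximizer shows that, with probability at least $1-\delta$, the true model $\theta^*$ survives in $\cB^k$ for every $k\in[K]$ simultaneously. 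Since each optimistic estimate satisfies $\up{V}_i^\pi=\sup_{\hat\theta\in\cB^k}V_i^\pi(\hat\theta)$, membership of $\theta^*$ yields $\up{V}_i^\pi\ge V_i^\pi(\theta^*)$ for every player $i$ and every deterministic joint policy $\pi$.

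Next I would set up the game-theoretic regret decomposition, which is unchanged from the undercomplete case and applies uniformly to Nash, CCE and CE. Fix an episode $k$ and a player $i$. Because $\pi^k$ is an equilibrium of the optimistic normal-form game whose payoffs are the tensors $\up{V}_i$ returned by \optequi, the corresponding no-improvement inequality holds in the optimistic game—e.g. $\max_{\pi_i'}\up{V}_i^{\pi_i'\times\pi_{-i}^k}\le\up{V}_i^{\pi^k}$ for Nash and CCE, and the strategy-modification analogue $\max_{\phi_i}\up{V}_i^{(\phi_i\diamond\pi_i^k)\odot\pi_{-i}^k}\le\up{V}_i^{\pi^k}$ for CE. Combining this with optimism on the deviation side gives, for the CCE case,
$$V_i^{\dagger,\pi_{-i}^k}(\theta^*)-V_i^{\pi^k}(\theta^*)\;\le\;\up{V}_i^{\pi^k}-V_i^{\pi^k}(\theta^*).$$
The one subtlety to track is that $\pi^k$ is a mixture, so $\up{V}_i^{\pi^k}$ is the average over realized deterministic policies of their per-policy optimistic values, each attained possibly at a different model of $\cB^k$; hence the right-hand side is an expectation, over the deterministic realization of $\pi^k$ drawn through the shared seed, of the value gap between a confidence-set model and $\theta^*$ under the realized policy. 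This is exactly the form the estimation bound must control, and the Nash and CE cases reduce to it identically.

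For the cumulative error bound I would bound each value gap by $H$ times a trajectory total-variation distance and then reduce the regret to $\sum_k \E_{\pi'\sim\pi^k}\|\P^{\pi'}_{\hat\theta}-\P^{\pi'}_{\theta^*}\|_{\rm TV}$, now importing the multi-step OMLE machinery of \cite{Liu2022when}. The confidence-set construction together with the rollouts $\pi^k_{1:h}\circ\text{uniform}(\cA)$ for $h=0,\ldots,H-m$ controls the cumulative squared Hellinger distance between every surviving model and $\theta^*$ on $m$-step windows, after which a pigeonhole/eluder counting over the model class converts the running sum into an $\cO(\sqrt{K})$ bound—this is where the dependence on $S^2A+SO$ inside $\beta$ and the $\sqrt{k}$ growth originate. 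The delicate part is the bridge back to $\pi^k$: the regret is expressed through $\pi^k$ alone, whereas the likelihood is accumulated under the uniform-action tails. Closing this gap requires (i) the multi-step weakly revealing condition $\sigma_S(\M_h)\ge\alpha$ of Assumption \ref{asp:over}, which lets one stably invert $m$-step observation statistics into latent-state beliefs and propagate them forward, and (ii) an importance-weighting step that relabels uniform actions as the intended actions of $\pi^k$ at a cost of $A^{m-1}$ per window; together these generate the $\alpha^{-1}$ and $A^m$ factors in the stated bound.

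I expect this final bridging step to be the main obstacle. The optimism and game-theoretic pillars are, once the equilibrium property of \optequi\ and the mixed-policy bookkeeping are in place, essentially algebraic; by contrast, showing that the Hellinger control obtained under the exploratory policies dominates the value gap under $\pi^k$—losing no more than $\poly(S,A^m,O,H,\alpha^{-1})$ and uniformly over $k$ so that the per-episode terms telescope into $\sqrt{k}$—is the technically demanding ingredient, and is precisely the multi-step estimate of \cite{Liu2022when} that Assumption \ref{asp:over} is tailored to support.
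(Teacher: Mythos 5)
Your proposal is correct and follows essentially the same route as the paper: optimism from $\theta^\star\in\cB^k$, the identical game-theoretic decomposition used in the undercomplete case, and the cumulative total-variation bound imported wholesale from the multi-step OMLE analysis of \cite{Liu2022when} (the paper's Theorem \ref{thm:pomdp-over}), which already absorbs the bridge between the uniform-action exploration rollouts and $\pi^k$ that you flag as the main obstacle. The only bookkeeping you leave implicit is the paper's Step 1 rewriting---sampling a single deterministic $\mu^k\sim\pi^k$ per episode and applying Jensen plus Azuma--Hoeffding to replace the mixture expectation by the realized $\mu^k$ at an extra $\tilde{\cO}(H\sqrt{K})$ cost---so that the imported bound, which is stated for the realized deterministic policies, applies directly.
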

Theorem \ref{thm:over} claims that the total  \{Nash,CCE,CE\}-``regret'' (that are computed on policy $\pi^1,\ldots,\pi^k$) of  \algNameMulti~is upper bounded by $\tilde{\cO}(\sqrt{k})$ for any multi-step weakly revealing POMGs satisfying Assumption \ref{asp:over}. We remark that, strictly speaking, Theorem \ref{thm:over} is not a standard regret guarantee since the policies executed by \algNameMulti~are compositions of  $\pi^1,\ldots,\pi^k$ and random actions, instead of purely $\pi^1,\ldots,\pi^k$. 
Nevertheless, we can still utilize the standard online-to-batch conversion to obtain the following sample complexity guarantee: 
\begin{corollary} \emph{(sample complexity of \algNameMulti)}
    Under the same setting as Theorem \ref{thm:over}, when $K \ge \poly(S,A^m,O,H,\alpha^{-1},\log(\epsilon^{-1}\delta^{-1}))\cdot\epsilon^{-2},$   then with probability at least $1/2$, $\pi^{\rm out}$  is an $\epsilon$-$\{\text{Nash, CCE, CE}\}$ policy.
    \end{corollary}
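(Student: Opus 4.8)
The plan is to derive this corollary from Theorem~\ref{thm:over} by a standard online-to-batch conversion, exploiting the fact that $\pi^{\rm out}$ is drawn uniformly from the optimistic policies $\{\pi^k\}_{k\in[K]}$ on which the regret in Theorem~\ref{thm:over} is measured. I would fix the target equilibrium notion, say CCE; the Nash and CE cases are identical once the per-episode gap is replaced by its corresponding definition, and one simply runs the algorithm with \textsc{Equilibrium} set to the desired notion.

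First I would record that each per-episode gap $g_k := \max_{i\in[n]}(V_{i}^{\dagger,\pi_{-i}^k} - V_{i}^{\pi^k})$ is nonnegative, since the best response (resp.\ best strategy modification) can only improve a player's value, so $V_{i}^{\dagger,\pi_{-i}^k} \ge V_{i}^{\pi^k}$ for every $i$ and $k$. Crucially, by Definition~\ref{def:CCE} the event ``$g_k \le \epsilon$'' is \emph{exactly} the statement that $\pi^k$ is an $\epsilon$-approximate CCE (and analogously for Nash via Definition~\ref{def:NE} and CE via Definition~\ref{def:CE}). Hence it suffices to bound $g_{k^\star}$, where $k^\star$ is the uniformly random index that defines $\pi^{\rm out}$.

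Next I would invoke Theorem~\ref{thm:over}: on an event $\mathcal{E}$ of probability at least $1-\delta$ we have $\sum_{k=1}^K g_k = \reg_{\rm CCE}(K) \le C\sqrt{K}$ with $C=\poly(S,A^m,O,H,\alpha^{-1},\log(K\delta^{-1}))$, so the empirical average satisfies $\frac{1}{K}\sum_{k=1}^K g_k \le C/\sqrt{K}$. Since $k^\star$ is drawn uniformly and independently of $\mathcal{E}$, conditioned on $\mathcal{E}$ the expectation of $g_{k^\star}$ over the output randomness equals this average; as $g_{k^\star}\ge 0$, Markov's inequality gives that, conditioned on $\mathcal{E}$, $g_{k^\star} \le 4C/\sqrt{K}$ with probability at least $3/4$. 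Taking $\delta = 1/4$ and combining, the success probability is at least $\tfrac34\cdot\tfrac34 = \tfrac{9}{16}\ge \tfrac12$. Finally I would choose $K$ so that $4C/\sqrt{K}\le\epsilon$; because $C$ depends only polylogarithmically on $K$ and $\delta$, solving for $K$ yields the stated bound $K \ge \poly(S,A^m,O,H,\alpha^{-1},\log(\epsilon^{-1}\delta^{-1}))\cdot\epsilon^{-2}$, and on the good event $\pi^{\rm out}$ is an $\epsilon$-CCE (respectively $\epsilon$-Nash/$\epsilon$-CE).

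I do not expect a genuine obstacle here; the only points requiring care are (i) keeping the two sources of randomness---the high-probability regret event $\mathcal{E}$ and the uniform draw of $k^\star$---separate and combining them to reach the target confidence $1/2$, and (ii) absorbing the $\log(K/\delta)$ dependence of $C$ into the final polynomial after substituting the chosen $K$, which only changes the $\poly$ factor by polylogarithmic terms. I would also note that since $\pi^{\rm out}$ is sampled from the optimistic policies $\pi^k$ rather than the exploration-augmented policies $\pi^k_{1:h}\circ\mathrm{uniform}(\cA)$ actually executed, the argument requires nothing beyond Theorem~\ref{thm:over}, whose regret is already phrased in terms of $\pi^1,\dots,\pi^k$.
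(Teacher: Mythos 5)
Your overall route---online-to-batch conversion, Markov's inequality over the uniformly drawn index $k^\star$, a constant choice of $\delta$, and absorbing the $\log(K/\delta)$ factors after solving for $K$---is exactly the standard argument the paper intends (it gives no explicit proof of this corollary), and your remarks about keeping the two sources of randomness separate and about $\pi^{\rm out}$ being drawn from the optimistic policies $\pi^k$ rather than the executed exploration-augmented policies are correct and worth making. However, one step would fail: the claim that every per-episode gap $g_k=\max_i\bigl(V_i^{\dagger,\pi_{-i}^k}-V_i^{\pi^k}\bigr)$ is nonnegative is false in the CCE case. The best response $V_i^{\dagger,\pi_{-i}}=\max_{\pi_i'}V_i^{\pi_i'\times\pi_{-i}}$ ranges only over policies \emph{independent} of the shared randomness in $\pi_{-i}$, whereas $V_i^{\pi^k}$ is the value under the \emph{correlated} joint policy; in a general-sum coordination game where every player is rewarded for matching and $\pi^k$ correlates them perfectly, every $i$ has $V_i^{\dagger,\pi_{-i}^k}<V_i^{\pi^k}$, so $g_k<0$. (For Nash the gap is nonnegative because $\pi^k$ is a product policy, and for CE because the identity modification is an admissible $\phi_i$; only CCE is affected.) Since $g_k$ is merely bounded below by $-H$, the bound $\sum_k g_k\le C\sqrt{K}$ alone does not support Markov's inequality: half the episodes could have gap $-H$ and half gap $H$ with the sum still equal to $0$, yet a uniform draw would fail with probability $1/2$.

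The repair is to run Markov's inequality on the nonnegative surrogate that the proof of Theorem~\ref{thm:over} (which reuses Step~3 of the proof of Theorem~\ref{thm:under}) actually controls, namely $h_k:=\max_i\,\E_{\mu\sim\pi^k}\bigl[\up{V}_i^{k,\mu}-V_i^{\mu}\bigr]$. On the event $\theta^\star\in\cB^k$ one has $\up{V}_i^{k,\mu}\ge V_i^{\mu}$ for every deterministic $\mu$, hence $h_k\ge 0$; the chain in \eqref{eq:proof-1} (with the final equality weakened to ``$\le$'' for CCE) gives $g_k\le h_k$ for each $k$; and the same chain bounds $\sum_{k=1}^K h_k$ by the stated regret bound. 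Applying Markov to $h_{k^\star}$ and then using $g_{k^\star}\le h_{k^\star}$ yields your conclusion with the same constants, after which the rest of your argument goes through unchanged.
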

Here the dependence on the precision parameter $\epsilon$ is optimal up to poly-logarithmic factors. 
Finally, observe that the sample complexity has $A^m$ dependency that is exponential in $m$, which is unavoidable in general even in the single-player setting (POMDPs) \citep{Liu2022when}. Nonetheless, in order to make $\min_h\rank(\M_h)=S$ possible (Assumption \ref{asp:over}), we only need to make $(OA)^m \gtrsim S$, i.e., $m \gtrsim \log S$ which is very small. In this paper, when we claim the sample complexity is polynomial, we consider $m$ to be small enough so that $A^m \le \poly(S,A,O,H, \alpha^{-1})$.


\section{Playing against Adversarial Opponents}

In this section, we turn to the online setting where the learner only controls a single player  and the remaining players can execute arbitrary strategies. In this setting, we no longer target at learning  game-theoretic  equilibria because if other players keep playing some highly suboptimal policies then  the learner may never be able to explore the environment thoroughly and thus lacks sufficient information to compute  equilibria. 
Instead, we consider the standard goal for  online setting, which is to achieve low regret in terms of cumulative rewards even if all other players play adversarially against the learner. 
Without loss of generality, we assume the learner only controls the $1^{\rm st}$ player throughout this section.

\subsection{Statistical hardness for the standard setting}\label{subsec:adv-standard}

We first consider the standard POMG setting where each player can only  observe her \emph{own} observations and actions. We prove that achieving low regret in this setting is impossible in general even if (i) the POMG is two-player zero-sum and satisfies Assumption \ref{asp:under} with $\alpha=1$, (ii) the opponent keeps playing a fixed deterministic policy \emph{known} to the learner, and (iii) the only  parts of  the model unknown to the learner are the emission matrices.

\begin{theorem}\label{thm:lowerbound}
    For any $L,k\in\N^+$, there exist (i) a  two-player zero-sum POMG  of size $S,A,O,H=\cO(L)$ and satisfying Assumption \ref{asp:under} with $\alpha=1$, and (ii) a fixed opponent who keeps playing a known deterministic policy $\pi_2$, so that with probability at least $1/2$
    $$\textstyle
    \sum_{t=1}^k \left(\max_{\tilde \pi_1}\min_{\tilde \pi_2} V_1^{\tilde\pi_1 \times \tilde\pi_2} - V_{1}^{\pi^t_1 \times \pi_2}\right) \ge \Omega\left(\min\{2^L,k\} \right),
    $$
    where $\pi_1^t$ is the policy played by the learner in the $t^{\rm th}$ episode. 
    \end{theorem}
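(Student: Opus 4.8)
The plan is to reduce the learning problem to a hidden-combination search while exploiting the defining asymmetry of the standard setting: the information that makes the instance weakly revealing lives entirely in player~2's observation channel, which the learner never inspects. Concretely, I would construct a family of ``combination-lock'' POMGs indexed by a hidden string $\sigma=(\sigma_1,\dots,\sigma_L)\in\{0,1\}^L$, where each $\sigma_h$ is an unknown relabeling (identity or swap) of a one-bit alphabet. At every step the latent state carries a fresh uniform bit $c_h\in\{0,1\}$ together with an alive/dead flag. Player~2 observes the state exactly (the flag and $c_h$), whereas player~1 observes only the \emph{scrambled} bit $\sigma_h(c_h)$; crucially, the dead states emit $\sigma_h$ of an independent fresh bit, so player~1's observations are statistically identical in the alive and dead branches. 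The transition is fixed and known: from an alive state with bit $c_h$, player~1 stays alive iff her action equals $c_h$ and otherwise moves to a dead (absorbing) state, while in all cases $c_{h+1}$ is resampled uniformly and player~2's action is ignored. The single reward is $1$ at step $H=L+1$ iff player~1 is still alive, delivered through her terminal observation.

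I would then verify the four requirements. Setting $r_2=1-r_1$ makes the game zero-sum (i); the opponent may play \emph{any} fixed known deterministic policy since player~2 is irrelevant to both the dynamics and player~1's reward (ii); transitions, rewards, and $\mu_1$ are identical across the family, so the only unknown object is the emission encoded by $\sigma$ (iii); and Assumption~\ref{asp:under} holds with $\alpha=1$ because the four states emit four \emph{distinct deterministic} joint observations (player~2's component alone separates them), so each $\O_h$ has orthonormal columns. The maximin value is exactly $1$: the policy that knows $\sigma$ decodes $c_h=\sigma_h^{-1}(\text{obs})$ and echoes it, staying alive against every player~2.

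For the lower bound the key point is that player~1's non-terminal observations $x_h=\sigma_h(c_h)$ are jointly uniform and \emph{statistically independent of} $\sigma$ (each $c_h$ is fresh uniform, independent of the actions and of $\sigma$), so within an episode the learner obtains no information about $\sigma$; all information flows through the single terminal win/lose bit, and since the alive/dead flag is hidden from player~1 a wrong action is never localized. Thus each episode is one query of a length-$L$ combination lock: a win requires $\sigma_h(a_{1,h})=x_h$ for all $h$, which for any fixed policy has probability exactly $2^{-L}$ over a uniform $\sigma$. I would make this quantitative by a phantom-run coupling in which the guesses issued before any win depend only on the $\sigma$-independent observations; this yields $\P[T^\star\le t]\le t\,2^{-L}$ for the first winning episode $T^\star$, and more generally $\E[V_1^{\pi_1^t}(\sigma)]\le t\,2^{-L}$ for the value $V_1^{\pi_1^t}(\sigma)$ of the episode-$t$ policy against $\pi_2$. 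Summing over the first $T=\min\{k,2^{L-1}\}$ episodes gives $\E[\sum_{t\le T}V_1^{\pi_1^t}]\le T/4$, hence the expected regret there is at least $3T/4$; since this regret is bounded by $T$, a one-sided Markov inequality gives $\P[\text{regret}\ge T/2]\ge 1/2$, i.e. $\Omega(\min\{2^L,k\})$ with probability $\ge 1/2$ for a uniformly random $\sigma$, and averaging over $\sigma$ fixes a single hard POMG.

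The main obstacle is the tension between the two goals: to force linear-in-$\min\{2^L,k\}$ regret the hidden bits must be unrecoverable by player~1, yet the maximin value must stay bounded away from $0$ so that there is something to regret. I resolve this by having player~1's observations carry the bits in a form decodable \emph{only} with knowledge of the unknown emission $\sigma$, and by routing the alive/dead flag exclusively through player~2's channel; the positive result \algAdv~would let a learner who sees player~2's replay localize errors and learn $\sigma$ in $\poly(L)$ episodes, which is exactly why access to the other player's observations is necessary. The remaining subtlety is purely quantitative: because the bits are stochastic, one must first establish the independence of player~1's observations from $\sigma$ in order to legitimately collapse each episode's feedback to a single bit, which is the step that turns the instance into a genuine $2^L$ search rather than a $\poly(L)$-learnable one.
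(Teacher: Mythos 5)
Your construction is correct and proves the theorem as stated, but it is genuinely different from the paper's. You hide the secret (the scrambling $\sigma$) in the \emph{learner's own} emission channel and make player~2 a dummy: player~1's individual observations are non-revealing noise, and the instance is essentially the single-agent combination-lock POMDP hardness example, promoted to a two-player game only because the dummy opponent's perfect observations make the \emph{joint} emission matrix injective (hence $\alpha=1$ under Assumption~\ref{asp:under}). The paper instead keeps the learner's own sensing perfect --- the max-player directly observes the latent state --- and hides the secret in the \emph{opponent's} emission matrix (which circles are ``red''); the known, fixed opponent policy reacts to her own observations, so the learner's ignorance of what the opponent sees is converted into a reward penalty, again yielding a $2^{\Omega(L)}$-armed needle-in-a-haystack. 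Both routes give the same $\Omega(\min\{2^L,k\})$ bound via the same phantom-coupling/one-bit-of-feedback-per-episode argument and a reverse Markov step, and both correctly exhibit why game replays restore learnability. What each buys: your reduction is simpler and more modular (it imports the known POMDP lower bound almost verbatim, and the maximin value is exactly $1$), whereas the paper's version makes a sharper multi-agent point --- hardness persists even when the learner's own observation channel fully reveals the state and she knows the opponent's policy, because she cannot simulate an opponent whose inputs she cannot see. One presentational wrinkle to fix in your write-up: to get $\alpha=1$ exactly you need the dead states' emissions to be \emph{deterministic} functions of the (four-valued) latent state, so ``the dead state emits $\sigma_h$ of an independent fresh bit'' should be realized by storing the fresh bit $c_h$ in the dead state and emitting $\sigma_h(c_h)$ deterministically; if the dead state instead randomizes its emission, the corresponding columns of $\O_h$ have norm $1/\sqrt{2}$ and you only get $\alpha=1/\sqrt{2}$. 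Likewise your constants give $\P[\mathrm{regret}\ge T/2]\ge (T-1)/(2T)$ rather than exactly $1/2$; taking $T=\min\{k,2^{L-2}\}$ (or any similar adjustment absorbed by the $\Omega(\cdot)$) closes this.
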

Theorem \ref{thm:lowerbound} claims that when the learner is not allowed to access the opponent's observations and actions, there exists exponential regret lower bound for competing with the max-min value (i.e., Nash value in two-player zero-sum POMGs) even in the very benign scenario as described above. We remark that this lower bound directly implies competing with the best fixed policy in hindsight is also hard because the max-min value is always no larger than the value of the best-response to $\pi_2$:
$$ \textstyle \max_{\tilde \pi_1}\min_{\tilde \pi_2} V_1^{\tilde\pi_1 \times \tilde\pi_2}\le  \max_{\tilde\pi_1} V_{1}^{\tilde \pi_1\times \pi_{2}}=  V_{1}^{\dagger, \pi_{2}}.$$

\begin{algorithm}[t]
    \caption{\algAdv}
 \begin{algorithmic}[1]\label{alg:adv}
\STATE \textbf{Initialize:} $\cB^1 = \{ \hat \theta\in\Theta:~\sigma_{S}(\hat\O)\ge \alpha\}$, $\cD=\{\}$ 
\FOR{$k=1,\ldots,K$}
    \STATE learner computes 
    $(\cdot, \pi^k_1) = \argmax_{\hat\theta\in\cB^k, \hat\pi_1\in\Pi_1}\min_{\hat\pi_{-1}\in\Pi_{-1}} V^{\hat\pi_1 \times \hat\pi_{-1}}_{1}(\hat\theta)$ \label{alg:3-1}
    \STATE opponents pick policies $\pi^k_{-1}$  \label{alg:3-2}
    \STATE execute policy $\pi^k=\pi^k_1\times\pi^k_{-1}$ to collect $\tau^k=(\o_1^k,\a_1^k,\ldots,\o_H^k,\a_H^k)$  
    \STATE  add $(\pi^k,\tau^k)$ into $\cD$ and update \label{alg:3-3}    \vspace{-2mm}
    \begin{equation}\label{eq:adv-Bset}
    \cB^{k+1} = \bigg\{\hat\theta \in \Theta: \sum_{(\pi,\tau)\in\cD} \log \P_{{\hat\theta}}^{\pi} (\tau)
    \ge \max_{ \theta' \in\Theta} \sum_{(\pi,\tau)\in\cD} \log \P^{\pi}_{{\theta'}}(\tau) -\beta \bigg\} \bigcap \cB^1
    \end{equation}    
    \vspace{-3mm}
    \ENDFOR
 \end{algorithmic}
 \end{algorithm}

\subsection{Positive results for the game-replay setting}
\label{subsec:adv-pos}

In this section, we consider the game-replay setting where \emph{after} each episode of play, every player will reveal their observations and actions in this episode to other players. In other words, every player is able to observe the whole trajectory $\tau^k=(\o_1^k,\a_1^k,\ldots,\o_H^k,\a_H^k)$ \emph{after} the $k^{\rm th}$ episode is finished. 
The motivation for considering this setting is in many real-world games, e.g., Dota, StarCraft and Poker, players are usually allowed to watch the replays of the games they have played, in which they can freely view other players' observations and actions.  
Below, we show that a simple variant of \algName~enjoys sublinear regret when playing against adversarial opponents.

\paragraph{Algorithm description} We provide the formal description of \algAdv~in Algorithm \ref{alg:adv}. 
Same as \algName, \algAdv~utilizes the  relaxed MLE approach to construct the confidence set. 
The key modification lies in the computation of player $1$'s (stochastic)  policy $\pi_1^k$ (Line \ref{alg:3-1}). Specifically, the learner will compute the most optimistic model ${\theta}^k$ in the confidence set $\cB^k$ by examining  player $1$'s max-min value in each model. Then choose player $1$'s policy to be the one with the highest value under ${\theta}^k$, assuming all other players jointly play against player $1$.

Finally, we remark that although the confidence set construction in Line \ref{alg:3-3} seems to involve the joint policies $\pi$ of all players, the confidence set itself is in fact independent of the joint policies $\pi$. Therefore, the $1^{\rm st}$ player can still construct the confidence set without knowledge of other players' policies. This is because the dependency of the loglikelihood function on policy $\pi$ are equal on both sides of \eqref{eq:adv-Bset}, and thus they cancel with each other. Formally, for any  $\hat\theta,\theta'\in\Theta$, we have
$$
\textstyle 
\sum_{t=1}^{k} \left(\log \P_{{\hat\theta}}^{\pi^t} (\tau^t)
-  \log \P^{\pi^t}_{{\theta'}}(\tau^t) \right)  = \sum_{t=1}^{k} \left(\log \P_{{\hat\theta}} (\o_{1:H}^t\mid \a_{1:H}^t)
-  \log \P_{{\theta'}}(\o_{1:H}^t\mid \a_{1:H}^t) \right).
$$

\paragraph{Theoretical guarantees}
Below we present the main theorem for \algAdv.

\begin{theorem}\emph{(Regret of \algAdv)}\label{thm:adv}
    Under Assumption \ref{asp:under},
	there exists an absolute constant $c$ such that for any $\delta\in(0,1]$ and $K\in\N$, Algorithm \ref{alg:adv} with $\beta = c\left(H(S^2A+SO)\log(SAOHK)+\log(K/\delta)\right)$ satisfies  that  
	with probability at least $1-\delta$, 
	$$\textstyle
    \sum_{t=1}^k \left(\max_{\tilde \pi_1}\min_{\tilde \pi_2} V_1^{\tilde\pi_1 \times \tilde\pi_2}- V_{1}^{\pi^t}\right) 
    \le\poly(S,A,O,H,\alpha^{-1},\log(K\delta^{-1}))\cdot \sqrt{k} \quad \text{for all $k\in[K]$}.
    $$
\end{theorem}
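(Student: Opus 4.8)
The plan is to run the standard optimism-plus-MLE argument, reducing the max-min regret to a cumulative trajectory-distribution discrepancy that the OMLE confidence-set machinery of \cite{Liu2022when} controls. Write $V^\star = \max_{\tilde\pi_1}\min_{\tilde\pi_{-1}} V_1^{\tilde\pi_1\times\tilde\pi_{-1}}(\theta^\star)$ for the benchmark max-min value under the true model $\theta^\star$ (in the two-player case $-1=2$), and let $(\theta^k,\pi_1^k)$ be the optimistic model and policy computed in Line~\ref{alg:3-1}. First I would establish optimism. The key enabling fact, already highlighted after \eqref{eq:adv-Bset}, is that the relaxed-MLE confidence set depends only on the action-conditioned observation law $\P_{\hat\theta}(\o_{1:H}\mid\a_{1:H})$ and is therefore independent of how the joint actions were generated. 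Importing the OMLE guarantee (adapted as in the obstacle below), with probability $\ge 1-\delta$ we have $\theta^\star\in\cB^k$ for all $k$. Since $(\theta^k,\pi_1^k)$ maximizes the max-min value over $\cB^k\times\Pi_1$ and $\theta^\star\in\cB^k$, it follows that $\min_{\hat\pi_{-1}}V_1^{\pi_1^k\times\hat\pi_{-1}}(\theta^k)\ge V^\star$; evaluating the inner minimum at the actually-played $\pi_{-1}^k$ yields the per-episode reduction
\[
V^\star - V_1^{\pi^k}(\theta^\star) \;\le\; V_1^{\pi^k}(\theta^k) - V_1^{\pi^k}(\theta^\star),
\qquad \pi^k = \pi_1^k\times\pi_{-1}^k.
\]

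Next I would convert the per-episode model mismatch into a trajectory-distribution distance. Because the cumulative reward is a $[0,H]$-valued function of the trajectory, this gap is at most $H\cdot\mathrm{TV}(\P^{\pi^k}_{\theta^k},\P^{\pi^k}_{\theta^\star})\lesssim H\cdot D_{\mathrm H}(\P^{\pi^k}_{\theta^k},\P^{\pi^k}_{\theta^\star})$, using $\mathrm{TV}\le\sqrt2\,D_{\mathrm H}$. It then remains to bound $\sum_{t\le k} D_{\mathrm H}(\P^{\pi^t}_{\theta^t},\P^{\pi^t}_{\theta^\star})$, for which I would invoke the two structural ingredients of OMLE: (i) membership $\theta^t\in\cB^t$ gives the in-sample bound $\sum_{s<t}D_{\mathrm H}^2(\P^{\pi^s}_{\theta^t},\P^{\pi^s}_{\theta^\star})\le O(\beta)$; and (ii) a pigeonhole/eluder argument specialized to $\alpha$-weakly-revealing models — where the $\alpha^{-1}$ enters through inverting each emission matrix via Assumption~\ref{asp:under} to bound belief errors by observation errors — converts this in-sample control into a bound on the out-of-sample sum, giving $\sum_{t\le k} D_{\mathrm H}(\cdots)\le \poly(S,A,O,H,\alpha^{-1})\sqrt{k\beta}$ after Cauchy--Schwarz. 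Crucially, since both $\cB^k$ and the law $\P_\theta(\o_{1:H}\mid\a_{1:H})$ are policy-independent, this eluder argument is exactly the single-agent one of \cite{Liu2022when} with the joint action space $\cA$ (so $A=\prod_i A_i$) in the role of the action set; the adversary contributes nothing beyond determining which joint action sequences get explored. Summing and substituting the theorem's choice of $\beta$ gives the claimed $\poly(\cdots)\sqrt{k}$ bound uniformly in $k$.

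The main obstacle, and the only place where the adversarial multi-agent setting genuinely departs from the single-agent analysis, is verifying that the MLE concentration establishing $\theta^\star\in\cB^k$ survives adaptive, arbitrary opponents. The remedy is to set up the martingale filtration so that each episode's realized joint actions $\a^k_{1:H}$ are measurable before its observations, and to check that, conditioned on those actions, the observation sequence is genuinely a sample from $\P_{\theta^\star}(\o_{1:H}\mid\a^k_{1:H})$ irrespective of the (possibly adversarial) law generating $\pi_{-1}^k$. The policy-independence identity displayed after \eqref{eq:adv-Bset} is precisely what makes the log-likelihood-ratio process collapse onto this single-agent conditional object, so the Freedman-type concentration and the resulting containment $\theta^\star\in\cB^k$ go through unchanged. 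Once this adaptation is in place, the optimism reduction, the value-to-Hellinger conversion, and the eluder pigeonhole are the single-agent OMLE argument applied essentially verbatim.
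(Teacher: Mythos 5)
Your proposal is correct and follows essentially the same route as the paper: optimism via $\theta^\star\in\cB^k$ combined with the definition of $\pi_1^k$ reduces the per-episode max-min gap to $V_1^{\pi^k}(\theta^k)-V_1^{\pi^k}(\theta^\star)$ after evaluating the inner minimum at the actually-played $\pi_{-1}^k$, this is converted to $H$ times a trajectory-distribution distance, and the cumulative sum is controlled by the imported OMLE guarantee, whose validity under adaptive opponents rests on the policy-independence of the confidence set exactly as you argue. The only cosmetic difference is that the paper states the imported bound directly in total variation ($L_1$) rather than passing through Hellinger distance.
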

Theorem \ref{thm:adv} claims that  the regret of \algAdv~is upper bounded by $\tilde{\cO}(\sqrt{k})$  in any weakly revealing POMGs that satisfy Assumption \ref{asp:under}, no matter what adversarial strategies other players might take.
Here the regret is defined by comparing the cumulative rewards received by player $1$ to the max-min value 
$\max_{\tilde \pi_1}\min_{\tilde \pi_2} V_1^{\tilde\pi_1 \times \tilde\pi_2}$ that is the largest value she could receive if all other players jointly play against her. 
Notice that this regret is  weaker than the typical version of regret considered in online learning literature, which typically competes with the best response in hindsight, i.e., 
$$
\textstyle 
\max_{\tilde\pi_1}\sum_{t=1}^k \left( V_{1}^{\tilde\pi_1\times\pi_{-1}^t}- V_{1}^{\pi^t}\right).
$$
Therefore, it is natural to ask whether we can obtain similar sublinear regret in terms of the above regret definition. Unfortunately, previous work \citep{liu2022learning}  proved that there exists exponential regret lower bound for competing with the best response in hindsight even in  \emph{fully observable} two-player zero-sum Markov games, which are special cases of POMGs  satisfying Assumption \ref{asp:under} with $\alpha=1$. As a result, achieving low regret in the above sense is also intractable in POMGs.

\paragraph{Negative results for generalization to multi-step weakly revealing POMGs}So far, we only derive the positive result (Theorem \ref{thm:adv}) for  single-step weakly revealing POMGs. 
A reader might wonder whether similar results can be obtained in the more general setting of multi-step weakly revealing POMGs. 
Unfortunately, this generalization turns out to be  impossible in general, even if (i) the POMG satisfies Assumption \ref{asp:over} with $m=2$ and $\alpha\ge1$, and (ii) the learner can directly observe the  opponents'  actions and observations. 
We defer the formal statement of this hardness result and its proof to Appendix \ref{sec:over-hard}.
\section*{Acknowledgements}
 Csaba Szepesv\'ari gratefully acknowledges the funding from Natural Sciences and Engineering Research Council (NSERC) of Canada, ``Design.R AI-assisted CPS Design'' (DARPA)  project and the Canada CIFAR AI Chairs Program for Amii.
Chi Jin gratefully acknowledges the Project X innovation fund from Princeton.
\bibliography{ref} 
\bibliographystyle{unsrtnat}

\newpage

\appendix


\newcommand{\cblue}[1]{{\color{blue}#1}}

\section{On Relation between IIEFGs and Weakly Revealing POMGs}
\label{sec:iiefg}

In this section, we consider imperfect-information extensive-form games with perfect recall, which we call IIEFGs for simplicity. In this section, we will show that any IIEFG (of a polynomial size) is also a weakly revealing POMG (of a polynomial size) that satisfies Assumption \ref{asp:under} with $\alpha=1$. As a result, all the algorithms and the polynomial sample complexity results developed in this paper immediately apply to learning IIEFGs using polynomial samples.


We note that the reverse is not true, we can easily construct a weakly-revealing POMGs of a polynomial size that can not be represented by any IIEFGs with a polynomial size, due to the restriction of tree-structured transition and deterministic transition in IIEFGs. Therefore, polynomial sample complexity for learning IIEFGs does not imply polynomial sample complexity results for learning POMGs. 

\subsection{Representing IIEFGs as 1-Weakly Revealing POMGs}

We first introduce the definition of IIEFGs. There are many equivalent formulations of IIEFGs and here we adopt the formulation used in \cite{kozuno2021model}, which allows a clearer  comparison to POMGs.
\begin{definition}
    An \emph{imperfect information extensive-form game with perfect recall}\footnote{Strictly speaking, we restrict our attention to timeable IIEFGs. We remark that, as argued by \cite{jakobsen2016timeability}, non-timeable IIEFGs can not be implemented in practical systems.} is a POMG$(H, \cS, \{\cA_i\}_{i=1}^n, $\\$ \{\cO_i\}_{i=1}^n; \T,\O,\mu_1; \{r_i\}_{i=1}^n)$ that additionally satisfies the followings: 
    \begin{itemize}
        \item \textbf{Tree-structured transition}: for each $s\in\cS$ and $h\in[H-1]$, there is at most one state-action pair  $(s',\a')\in\cS\times\cA$ such that $\T_h(s\mid s',\a')\neq 0$. In other words, for any $s_h$, there is a unique history sequence $(s_1,\a_1,\ldots,s_{h-1},\a_{h-1})$ that leads to $s_h$.
         
        \item \textbf{Deterministic emission and perfect-recall}: for each $s\in\cS$ and $h\in[H]$, $\|\O_h(\cdot \mid s)\|_0 =1$. That is, no state can emit two different observations. Moreover, for each player $i$ and $x\in\cO_i$, there is a unique history $(o_{i,1},a_{i,1},\ldots,o_{i,h}=x)$ up to $x$ from player $i$'s perspective. This means player $i$ can always retrieve her previous observations and actions solely from her current-step observation. In IIEFGs, the observations are usually referred to as \emph{information sets}.
        
        \item \textbf{Delayed and state-action-dependent reward}: different from our definition of reward in Section \ref{sec:prelim}, now each $r_{i,h}$ is a random function from $\cS\times\cA$ to $[0,1]$, and  the  rewards are  revealed to each learner only at the end of each episode. In other words, player $i$ gets to observe $r_{i,1}^k,\ldots,r_{i,H}^k$ after the $k^{\rm th}$ episode is finished. \footnote{We WLOG consider the delayed reward since almost all algorithms in the IIEFG literature only use information sets (i.e., do not use additional information in the intermediate reward) to make decisions.}
    \end{itemize}
\end{definition}

We now show that any IIEFG can be represented by $1$-weakly-revealing POMGs.

\begin{theorem}\label{thm:iiefg-pomg}
    Any IIEFG$(H, \cS, \{\cA_i\}_{i=1}^n, \{\cO_i\}_{i=1}^n; \T,\O,\mu_1; \{r_i\}_{i=1}^n)$ can be represented as a  POMG with $\prod_i |\cO_i |$ states,  the same action space, the same observation space, stochastic rewards which  depend on  the joint observation and action, and satisfying the single-step weakly revealing condition (Assumption \ref{asp:under}) with $\alpha=1$.
\end{theorem}

Theorem \ref{thm:iiefg-pomg} shows that any IIEFG of a polynomial size can be efficiently represented by $1$-weakly-revealing POMGs with a polynomial size. Here, we consider the number of player $n$ as constant when discussing polynomial versus exponential. 

\begin{proof}[Proof of Theorem \ref{thm:iiefg-pomg}]
We consider an equivalent POMG formulation denoted as $$(H, \cblue{\tilde\cS}, \{\cA_i\}_{i=1}^n, \{\cO_i\}_{i=1}^n; \cblue{\tilde\T},\cblue{\tilde\O},\cblue{\tilde\mu_1}; \cblue{\{\tilde r_i\}_{i=1}^n)}$$ 
where we highlight the modified parts in blue and define them as following:   
\begin{itemize}
    \item \textbf{State and transition.} Notice that the joint observation in IIEFGs always satisfies the Markov property because of the perfect-recall emission structure: 
$$
\P(\o_{h+1}\mid \o_1,\a_1,\ldots,\o_h,\a_h) = \P(\o_{h+1}\mid \o_h,\a_h).
$$
Therefore we can view the original joint observation space as the new state space $\tilde \cS:= \prod_i \cO_i$ and define the transition as 
$$
\tilde \T_h ( \tilde s ' \mid \tilde s, \a) := \P(\o_{h+1} = \tilde s' \mid \o_h = \tilde s, \a_h=\a).
$$
And the initial distribution is defined as $\tilde \mu_1 := \P(\o_1=\cdot \mid s_1 \sim \mu_1)$.

\item \textbf{Emission.} We define the emission so that player $i$ always observes $[\tilde s_h]_i$ (the $i^{\rm th}$ entry in $\tilde s_h$) with probability $1$ at step $h$. Formally for all $h\in[H]$ and $(\o,\tilde s)\in \cO\times\tilde\cS$
$$
\tilde \O_h( \o \mid  \tilde s )=  \mathbf{1}(\o = \tilde s).
$$
Clearly, in this case the joint emission is identity and therefore  satisfies the single-step weakly revealing condition (Assumption \ref{asp:under}) with $\alpha=1$.

\item \textbf{Reward.} As for the reward function, we let $\tilde r_{i,h} :=0$ for $h \le H-1$, and  define $\tilde r_{i,H}(\o_{H},\a_H)$
to be a random variable taking value $\sum_{h=1}^H r_{i,h}( s_h,\a_h)$ with $ s_{1:H}$  sampled from 
$$
\P(s_{1:H} =\cdot \mid   \o_1,\a_1,\ldots,\o_{H},\a_{H}) 
= \P(s_{1:H} =\cdot \mid   \o_{H},\a_H). 
$$
Therefore, the reward $\tilde r_{i,H}$ is a random function of the joint observation and action $(\o_{H},\a_H)$ at step $H$. 
\end{itemize}
 It is direct to see any policy induces the same distribution over $\o_1,\a_1,\ldots,\o_{H},\a_{H}$ and enjoys the same value in this new formulation as in the original IIEFG. 
 As a result, any algorithms designed for weakly revealing POMGs also apply to learning IIEFGs.

\paragraph{Stochastic reward depending on  the joint observation and action}
Recall when defining POMGs in Section \ref{sec:prelim}, we let the reward to be a deterministic function of individual observations. Nonetheless, one can easily verify all our results in this paper still hold without non-trivial modifications when the reward functions are  stochastic and depend on the joint observation and action. 
As a result, we conclude that any IIEFG with $O$ observations, $S$ latent states and $A$ actions can be represented as a  $1$-weakly revealing POMG with $O$ observations, $O$ latent states and $A$ actions, to which all our algorithms and theoretical guarantees directly apply.  
\end{proof}

\paragraph{Regarding the curse of multi-player} Note that all the sample complexity results proved in this paper scale exponentially with respect to $n$, the number of players. Therefore, they suffer from the curse of multi-players when $n$ is large. In particular, when specializing these results to the setting of IIEFGs, we obtain sample complexity scaling with $\prod_{i\in[n]} | \cO_i |$ instead of $\sum_{i\in[n]} | \cO_i |$ where the latter is achievable by algorithms specially designed for learning IIEFGs \citep[e.g.,][]{bai2022near}. Nonetheless, we observe that the $1$-weakly revealing POMG presentation of IIEFGs derived in Theorem \ref{thm:iiefg-pomg} possesses additional benign structures: the state space is factored and the emission is identity, which could potentially be exploited to overcome the curse of dimensionality with sharper analysis or different algorithm design (e.g., incorporate the idea of V-learning style algorithms \cite{jin2021v,song2021can,daskalakis2022complexity}).

\subsection{On Inefficiency of Representing POMGs using IIEFGs}

We prove two theorems for representing POMGs using IIEFGs: 
\begin{itemize}
\item First we show POMGs can be represented by IIEFGs with an exponentially large size. (Exponential large model is prohibitive in practice, more relevant question is for the polynomial size).
\item Then we prove a lower bound showing that there exists weakly revealing POMGs of constant size, which can not be represented by any IIEFGs with a polynomial size. 
This implies  IIEFGs can not efficiently represent POMGs and polynomial results for learning IIEFGs can not translate into efficiency guarantees for learning POMGs.
\end{itemize}

\begin{theorem}\label{thm:pomg-iiefg}
    A POMG$(H, \cS, \{\cA_i\}_{i=1}^n, \{\cO_i\}_{i=1}^n; \T,\O,\mu_1; \{r_i\}_{i=1}^n)$ can be represented as an  IIEFG with $(\prod_i |\cO_i |  |\cA_i|)^H$ states,  the same action space,   $(|\cO_i |  |\cA_i|)^H$  observations for each player $i\in[n]$.
     \end{theorem}
   \begin{proof}[Proof of Theorem \ref{thm:pomg-iiefg}]
   We consider an equivalent IIEFG formulation denoted as $$(H, \cblue{\tilde\cS}, \{\cA_i\}_{i=1}^n, \{\cO_i\}_{i=1}^n; \cblue{\tilde\T},\cblue{\tilde\O},\cblue{\tilde\mu_1}; \cblue{\{\tilde r_i\}_{i=1}^n)}$$ 
where we highlight the modified parts in blue and define them as following:   
\begin{itemize}
    \item \textbf{State and transition.}
    We view the entire interaction history  as the state of IIEFG, that is, $\tilde s_h = (\o_1,\a_1,\ldots,\o_h)$. Under such choice of latent state, the transition is clearly tree structured and satisfies: for any $\tilde s_h = (\o_1,\a_1,\ldots,\o_h)$ and 
    $\tilde s_{h+1} = (\o_1',\a_1',\ldots,\o_{h+1}')$ 
    $$
\tilde \T_h (\tilde s_{h+1} \mid \tilde s_{h}, \a_h) = \P(\o_{h+1}' \mid  (\o',\a')_{1:h}) \times \mathbf{1}(   (\o,\a)_{1:h}=  (\o',\a')_{1:h}).
$$

\item \textbf{Emission.} We define the emission so that player $i$ always observes $[\tilde s_h]_i$ (the $i^{\rm th}$ entry in $\tilde s_h$) with probability $1$ at step $h$. Formally for all $h\in[H]$, if the environment is at state $\tilde s_h = (\o_1,\a_1,\ldots,\o_h)$, then each player $i$ will observe $(o_{i,1},a_{i,1},\ldots,o_{i,h})$ with probability $1$. 
By the definition of state and transition, $[\tilde s_h]_i$ is exactly equal to the interaction history of player $i$. 
Therefore, such emission structure satisfies the perfect-recall condition. 

\item \textbf{Reward.} As for the reward function, we let $\tilde r_{i,h} :=0$ for $h \le H-1$. 
At step $H$, for any  $\tilde s_H = (\o_1,\a_1,\ldots,\o_H)$ and $\a_H$, we define 
$$
\tilde r_{i,H}(\tilde s_H, \a_H) = \sum_{h=1}^H r_{i,h}( o_{i,h},a_{i,h}).
$$ 
\end{itemize}
   \end{proof}

\begin{theorem}\label{thm:pomg-iiefg-ineff}
   There exists an $\Omega(1)$-weakly revealing POMG of size $\cO(1)$, which is not equivalent to any perfect-recall IIEFG with $ \max_{i\in[n]} |\cO_i|  \le 4^{H-1}$.
     \end{theorem}
  \begin{proof}[Proof of Theorem \ref{thm:pomg-iiefg-ineff}]
  It suffices to prove the above theorem for the single-agent case, i.e., POMDPs. 
 Consider a POMDP with $2$ states, $2$ actions and $2$ observations. The emission and transition is defined as 
 $$
\T_{h,i} = \begin{pmatrix}
\alpha_{h,i} & 1- \alpha_{h,i}\\
1- \alpha_{h,i} &  \alpha_{h,i}
\end{pmatrix}  \quad \mbox{and}  \quad 
\O_{h} = \begin{pmatrix}
\beta_{h} & 1- \beta_{h}\\
1- \beta_{h} &  \beta_{h}
\end{pmatrix}, $$
where $\{\alpha_{h,i} \}_{(h,i)\in[H]\times[2]}$ and $\{\beta_{h} \}_{h\in[H]}$ are i.i.d. sampled from $[0,1/2]$.
To represent the above POMDP as IIEFG with perfect recall, the size of the observation space must be at least 
$4^{H-1}$ since there are $4^{H-1}$ different possible  trajectories of form $(o_1,a_1,\ldots,o_{H-1},a_{H-1})$.
    \end{proof}

\section{Notations} We first introduce some notations that will be frequently used in the remainder of appendix.
\begin{itemize} 
    \item We will use $\mu\in\Pidet$ to refer to a \emph{deterministic} joint policy, and use $\mu_i\in\Pidet_i$ to refer to a \emph{deterministic} policy of player $i$. 
    
    \item Since each stochastic joint policy $\pi\in\Pi$ is equivalent to a distribution over all the deterministic joint polices $\Pidet$, with slight abuse of notation, we denote by $\mu\sim\pi$ the process of sampling a deterministic joint policy $\mu$ from the policy distribution specified by $\pi$. We can similarly define $\mu_i\sim\pi_i$ for any stochastic policy  $\pi_=i$ of player $i$.
    
    \item Given a policy $\pi$ and a POMG model $\theta$, denote by $\P^\pi_\theta$ the distribution over trajectories (i.e., $\tau_H$) produced by executing policy $\pi$ in a POMG parameterized by $\theta$. Since the reward per trajectory is bounded by $H$, we always have 
    $$
V_i^\pi(\theta)-V_i^\pi(\hat\theta) \le H \| \P_\theta^\pi - \P_{\hat\theta}^\pi\|
    $$
    for any policy $\pi$, POMG models $\theta,\hat\theta$, and player $i$.
    
    \item Denote by $\theta^\star$ the parameters of the groundtruth POMG model we are interacting with.
\end{itemize}

\section{Proofs for the Self-play Setting}

\subsection{Proof of Theorem \ref{thm:under}}
\label{subsec:proof-under}

In this section, we prove  Theorem \ref{thm:under} with a specific  polynomial dependency as stated in the following theorem.
\begin{theorem}\label{thm:under-rate}
    \emph{(Regret of \algName)}
        Under Assumption \ref{asp:under},
        there exists an absolute constant $c$ such that for any $\delta\in(0,1]$ and $K\in\N$, Algorithm \ref{alg:under} with $\beta = c\left(H(S^2A+SO)\log(SAOHK)+\log(K/\delta)\right)$ and \textsc{Equilibrium} being one of $\{\text{Nash, CCE, CE}\}$ satisfies (respectively) that  
        with probability at least $1-\delta$, 
        $$\textstyle \reg_{\{\rm Nash,CCE,CE\}}(k) \le
        \tilde{\cO}\left( \frac{S^{2}AO}{\alpha^2} \sqrt{k(S^2A+SO)} \times \poly(H)\right)\qquad \text{ for all }k\in[K].$$
\end{theorem}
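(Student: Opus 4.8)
The plan is to prove Theorem \ref{thm:under-rate} by combining two largely separate ingredients: (1) the OMLE statistical guarantees, which control how fast the confidence sets shrink around the true model $\theta^\star$ in terms of the cumulative prediction error on the executed policies, and (2) a game-theoretic argument showing that optimism in the value estimates converts this prediction error into a bound on the \{Nash, CCE, CE\}-regret. The two pieces are glued by the \emph{optimism} property: because the equilibrium is computed on the \emph{upper-confidence} values $\up{V}_i^\pi = \sup_{\hat\theta\in\cB^k} V_i^\pi(\hat\theta)$, we get that the estimated value dominates the true value, and the gap between the two is exactly the per-episode statistical error.

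\textbf{Step 1 (true model stays in the confidence set).} First I would invoke the OMLE concentration result imported from \cite{Liu2022when}: with the stated choice of $\beta = c(H(S^2A+SO)\log(SAOHK) + \log(K/\delta))$, with probability at least $1-\delta$ the groundtruth $\theta^\star$ lies in every confidence set $\cB^k$, and moreover the cumulative squared total-variation error of the models in $\cB^k$ along the executed policies is bounded, i.e.\ $\sum_{t=1}^{k}\|\P_{\hat\theta^t}^{\pi^t} - \P_{\theta^\star}^{\pi^t}\|^2 \lesssim \beta$ for the optimistic model $\hat\theta^t$ chosen in episode $t$. This is where the $\beta$ and the $\alpha^{-2}$ and singular-value machinery enter; I would treat the eluder-type / bracketing bound for weakly revealing POMDPs as a black box, noting only that the effective statistical dimension is $O(H(S^2A+SO))$ and the conversion from $L_1$ trajectory error to this dimension costs a factor of $\poly(S,A,O,\alpha^{-1})$ via the observable-operator/stability bound that uses Assumption \ref{asp:under}.

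\textbf{Step 2 (optimism $\Rightarrow$ per-episode regret bound).} Because $\theta^\star\in\cB^k$, the optimistic value satisfies $\up{V}_i^\pi \ge V_i^\pi(\theta^\star)$ for every player $i$ and joint policy $\pi$. Now I would unfold the definition of, say, the CCE-regret term in episode $k$:
\begin{equation*}
\max_{i}\left(V_i^{\dagger,\pi_{-i}^k}(\theta^\star) - V_i^{\pi^k}(\theta^\star)\right).
\end{equation*}
The key chain is to upper-bound the true best-response value $V_i^{\dagger,\pi_{-i}^k}(\theta^\star)$ by its optimistic counterpart $\up{V}_i^{\dagger,\pi_{-i}^k}$ (optimism), then use that $\pi^k$ is an exact equilibrium of the normal-form game with payoffs $\up{V}$ so that no player gains by deviating \emph{in the optimistic game}, i.e.\ $\up{V}_i^{\dagger,\pi_{-i}^k}\le \up{V}_i^{\pi^k}$ (or the CE/Nash analogue via strategy modifications). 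Finally, I bound $\up{V}_i^{\pi^k} - V_i^{\pi^k}(\theta^\star) \le H\|\P_{\hat\theta^k}^{\pi^k} - \P_{\theta^\star}^{\pi^k}\|$ using the value-to-trajectory-distance inequality stated in the Notations appendix. This telescopes each regret term into a single-episode model error evaluated at the executed policy $\pi^k$.

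\textbf{Step 3 (sum and Cauchy--Schwarz).} Summing over $k$ and applying Cauchy--Schwarz against the Step 1 bound $\sum_t \|\P_{\hat\theta^t}^{\pi^t}-\P_{\theta^\star}^{\pi^t}\|^2\lesssim \beta$ gives $\reg(k)\lesssim H\sqrt{k\cdot\beta}\cdot(\text{conversion factor})$, which after substituting $\beta$ and the $\alpha^{-2}$-dependent conversion yields the stated $\tilde{\cO}\big(\frac{S^2AO}{\alpha^2}\sqrt{k(S^2A+SO)}\cdot\poly(H)\big)$. The three equilibrium notions are handled uniformly: the \emph{only} place the choice of \textsc{Equilibrium} enters is the optimistic no-deviation inequality in Step 2, which holds by construction for whichever of Nash/CCE/CE is computed on the normal-form game. \textbf{The main obstacle} I anticipate is making Step 2's optimism argument fully rigorous for the \emph{correlated} notions (CCE and especially CE with strategy modifications): one must verify that the deviation $\pi_{-i}^k$ (or $\phi_i\diamond\pi_i^k$) evaluated under $\theta^\star$ is correctly dominated by an optimistic quantity that the equilibrium subroutine actually controls, since the optimistic model achieving $\up{V}_i^{\dagger,\pi_{-i}^k}$ may differ from the one achieving $\up{V}_i^{\pi^k}$. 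Resolving this requires that optimism is taken \emph{per deviation} so that the no-regret property of the normal-form equilibrium applies to the upper-confidence payoff matrix uniformly, which is exactly why Subroutine \ref{subalg} computes $\up{V}_i^\pi$ for \emph{all} deterministic joint policies before invoking \textsc{Equilibrium}.
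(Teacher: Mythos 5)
Your overall architecture matches the paper's: import the OMLE guarantee ($\theta^\star\in\cB^k$ plus a cumulative trajectory-distribution error bound), use optimism together with the equilibrium property of the optimistic normal-form game to reduce each regret term to an optimistic-value gap, and sum. The CE/CCE concern you flag at the end is also resolved exactly as you suggest (optimism is taken per deterministic joint policy, so the no-deviation property of the normal-form equilibrium applies uniformly).

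However, there is a genuine gap in how you close the argument, and it is precisely the point the paper spends its Step 1 on. After invoking the equilibrium property, the per-episode term is
$$\max_i\Bigl(\E_{\mu\sim\pi^k}\bigl[\up{V}_i^{k,\mu}\bigr]-\E_{\mu\sim\pi^k}\bigl[V_i^{\mu}(\theta^\star)\bigr]\Bigr)\le \E_{\mu\sim\pi^k}\Bigl[\max_i\sup_{\hat\theta\in\cB^k}\bigl(V_i^{\mu}(\hat\theta)-V_i^{\mu}(\theta^\star)\bigr)\Bigr],$$
where the supremum over models sits \emph{inside} the expectation over deterministic policies $\mu$, and the maximizing $\hat\theta$ varies with $\mu$ and $i$. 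Your Step 2 collapses this to $H\|\P_{\hat\theta^k}^{\pi^k}-\P_{\theta^\star}^{\pi^k}\|$ for a single model $\hat\theta^k$ and the mixture policy $\pi^k$; that inequality does not follow, since $\E_{\mu\sim\pi^k}\sup_{\hat\theta}\|\P^{\mu}_{\hat\theta}-\P^{\mu}_{\theta^\star}\|_1$ can strictly exceed $\sup_{\hat\theta}\|\P^{\pi^k}_{\hat\theta}-\P^{\pi^k}_{\theta^\star}\|_1$, and the OMLE guarantee only controls the error on the policies actually executed. The paper resolves this by rewriting the algorithm so that a deterministic $\mu^k\sim\pi^k$ is explicitly sampled and executed (and recorded in $\cD$), so that the imported bound controls $\sum_t\sup_{\theta^t\in\cB^t}\|\P^{\mu^t}_{\theta^t}-\P^{\mu^t}_{\theta^\star}\|_1$ for the realized deterministic policies, and then applies Azuma--Hoeffding to replace $\E_{\mu\sim\pi^k}[\cdot]$ by its realization at $\mu^k$ at the cost of an additive $\tilde{\cO}(H\sqrt{K})$ term. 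Your proposal contains neither the deterministic-policy sampling device nor the martingale step, so as written the glue between the game-theoretic reduction and the statistical bound is missing.
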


The proof consists of three steps:
\begin{enumerate}
    \item  First we rewrite Algorithm \ref{alg:under} in an equivalent form that is perfectly compatible with the analysis in \cite{Liu2022when}.
    \item After that we can directly  import the theoretical guarantees from \cite{Liu2022when} and obtain a sublinear upper bound for the cumulative error of density estimation. 
    \item Finally, we combine the game-theoretic analysis tailored for POMGs with the density estimation guarantee derived in the second step, which gives the desired sublinear game-theoretic regret. 
\end{enumerate}
\subsubsection{Step 1}\label{step:1}
To begin with, we make the following observations about Algorithm \ref{alg:under}:
\begin{itemize}
    \item The sampling procedure in each episode $k$ is equivalent to: first sample a \emph{deterministic} joint policy $\mu^k$ from $\pi^k$ and then execute $\mu^k$ to collect a trajectory $\tau^k$. 
    \item In constructing the confidence set $\cB^k$, we can replace $\pi^k$ with $\mu^k$ without making any difference, because the dependency of the log-likelihood function on policy $\pi$ are equal on both sides of the inequality in $\cB^k$ and thus they cancel with each other. Formally, for any  $\hat\theta,\theta'\in\Theta$, we have
    \begin{align*}
    &\sum_{t=1}^{k} \left(\log \P_{{\hat\theta}}^{\pi^t} (\tau^t)
    -  \log \P^{\pi^t}_{{\theta'}}(\tau^t) \right) \\
     = &\sum_{t=1}^{k} \left(\log \P_{{\hat\theta}} (\o_{1:H}^t\mid \a_{1:H}^t)
    -  \log \P_{{\theta'}}(\o_{1:H}^t\mid \a_{1:H}^t) \right) 
    =  \sum_{t=1}^{k} \left(\log \P_{{\hat\theta}}^{\mu^t} (\tau^t)
    -  \log \P^{\mu^t}_{{\theta'}}(\tau^t) \right).
    \end{align*}    
\end{itemize}
Based on the above two observations, Algorithm \ref{alg:under} can be \emph{equivalently} written in the form of Algorithm \ref{alg:under-equiv} where we highlight the modified parts in blue.

\begin{remark}
    The technical reason for rewriting Algorithm \ref{alg:under} in the form of Algorithm \ref{alg:under-equiv} is that in the optimistic equilibrium subroutine (Subroutine \ref{subalg}) we utilize the optimistic value estimate for each \emph{deterministic} joint policy to construct the optimistic normal-form game and compute the optimistic game-theoretic equilibria. 
    As a result, in order to control the cumulative regret due to over-optimism, we need  guarantees on the accuracy of optimistic value estimates for \emph{deterministic} joint policies. This is why we want to explicitly insert the ``dummy'' deterministic policy $\mu^k$ in each episode.
\end{remark}
\begin{algorithm}[t]
    \caption{\algName}
 \begin{algorithmic}[1]\label{alg:under-equiv}
\STATE \textbf{Initialize:} $\cB^1 = \{ \hat \theta\in\Theta:~\min_h\sigma_{S}(\hat\O_h)\ge \alpha\}$, $\cD=\{\}$ 
\FOR{$k=1,\ldots,K$}
\STATE compute $\pi^k=$\textsf{Optimistic\_Equilibrium}$(\cB^k)$  
    \STATE  \cblue{sample a deterministic joint plicy $\mu^k$ from $\pi^k$, then follow $\mu^k$ to collect a trajectory $\tau^k$ }
    \STATE add $(\cblue{\mu^k},\tau^k)$ into $\cD$  and update 
    $$
    \cB^{k+1} = \bigg\{\hat\theta \in \Theta: \sum_{(\pi,\tau)\in\cD} \log \P_{{\hat\theta}}^{\pi} (\tau)
    \ge \max_{ \theta' \in\Theta} \sum_{(\pi,\tau)\in\cD} \log \P^{\pi}_{{\theta'}}(\tau) -\beta \bigg\} \bigcap \cB^1$$
    \ENDFOR
 \end{algorithmic}
 \end{algorithm}

 \subsubsection{Step 2}\label{step:2}
Now we can directly instantiate  the analysis of optimistic MLE (Appendix E in \cite{Liu2022when}) on Algorithm \ref{alg:under-equiv}, which gives the following theoretical guarantee:
\begin{theorem} \emph{(\cite{Liu2022when})}\label{thm:pomdp-under}
Under Assumption \ref{asp:under} and the same choice of $\beta$ as in Theorem \ref{thm:under}, with probability at least $1-\delta$, Algorithm \ref{alg:under-equiv} satisfies that for \textbf{all} $k\in[K]$ and \textbf{all} $\theta^1\in\cB^1$,\dots,$\theta^K\in\cB^K$
\begin{itemize}
    \item $\theta^\star\in\cB^k$ ,
    \item $\sum_{t=1}^k  \| \P^{\mu^t}_{\theta^t} -\P^{\mu^t}_{\theta^\star} \|_{1} \le 
    \tilde{\cO}\left( \frac{S^{2}AO}{\alpha^2} \sqrt{k(S^2A+SO)} \times \poly(H)\right)
    $.
\end{itemize}
\end{theorem}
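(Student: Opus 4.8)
The plan is to prove Theorem \ref{thm:pomdp-under} by \emph{reducing} the self-play confidence-set dynamics of Algorithm \ref{alg:under-equiv} to the single-agent optimistic MLE (\textsf{OMLE}) guarantee of \cite{Liu2022when}, and then sketching why that guarantee yields the two claimed bullets. The first observation is that, \emph{from the viewpoint of the self-play controller}, a POMG is literally a POMDP: the controller picks a joint (here deterministic) policy $\mu^t$, and afterwards observes the full joint trajectory $\tau^t=(\o_1^t,\a_1^t,\ldots,\o_H^t,\a_H^t)$. Hence the induced single-agent model has latent space $\cS$, action space $\cA$, observation space $\cO$, transition $\P_h(\cdot\mid s,\bm a)$ and emission $\O_h(\cdot\mid s)$, and Assumption \ref{asp:under} ($\min_h\sigma_S(\O_h)\ge\alpha$) is \emph{exactly} the single-step weakly revealing condition for this POMDP. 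Crucially, the confidence-set update in Algorithm \ref{alg:under-equiv} depends on the data only through $\log\P_{\hat\theta}(\o^t_{1:H}\mid\a^t_{1:H})$ (the policy-cancellation identity recorded in Step 1), which is precisely the quantity \textsf{OMLE} uses; and although the $\mu^t$ are restricted to factorize across players, the \textsf{OMLE} analysis places no restriction on the policy sequence, requiring only that $\tau^t\sim\P^{\mu^t}_{\theta^\star}$. Thus Algorithm \ref{alg:under-equiv} \emph{is} \textsf{OMLE} on the induced POMDP, and Appendix E of \cite{Liu2022when} applies.

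For the first bullet ($\theta^\star\in\cB^k$ for all $k$), the key is a one-sided concentration bound on the log-likelihood ratio. For any fixed competitor $\theta'$, the increments $\log\big(\P^{\mu^t}_{\theta'}(\tau^t)/\P^{\mu^t}_{\theta^\star}(\tau^t)\big)$ have conditional exponential moment controlled because $\tau^t\sim\P^{\mu^t}_{\theta^\star}$; a standard supermartingale argument then shows $\sum_{t}\log\big(\P^{\mu^t}_{\theta^\star}(\tau^t)/\P^{\mu^t}_{\theta'}(\tau^t)\big)\ge -\cO(\log(\cN/\delta))$ simultaneously over a covering net of $\Theta$ of log-cardinality $\log\cN = \tilde{\cO}(H(S^2A+SO))$ (the $S^2A$ transition parameters and $SO$ emission parameters per step, over $H$ steps). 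A discretization argument extends this from the net to all of $\Theta$, and the stated choice $\beta=c(H(S^2A+SO)\log(SAOHK)+\log(K/\delta))$ is exactly what is needed so that $\theta^\star$ satisfies the defining inequality of $\cB^k$ for every $k$ with probability $1-\delta$.

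For the second bullet, the same concentration, combined with the fact that any $\theta\in\cB^k$ has log-likelihood within $\beta$ of the maximizer and hence $\sum_{t<k}\log\big(\P^{\mu^t}_{\theta^\star}(\tau^t)/\P^{\mu^t}_{\theta}(\tau^t)\big)\le\beta$, yields the \emph{in-sample} squared-Hellinger bound $\sum_{t<k} D_{\rm H}^2(\P^{\mu^t}_{\theta},\P^{\mu^t}_{\theta^\star})\le\cO(\beta)$ for all $\theta\in\cB^k$. The remaining, genuinely hard step is to convert this in-sample control into the \emph{out-of-sample, online} sum $\sum_{t=1}^k\|\P^{\mu^t}_{\theta^t}-\P^{\mu^t}_{\theta^\star}\|_1$, where each $\theta^t\in\cB^t$ is constrained only by data preceding round $t$. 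Here \cite{Liu2022when} exploits the weakly revealing structure: inverting the emission matrix (whose least singular value is $\ge\alpha$) expresses trajectory probabilities through an observable-operator representation whose Hellinger/total-variation discrepancy is, up to an $\alpha^{-2}$ amplification, a bilinear form of effective dimension $\poly(S,A,O)$. An elliptical-potential (pigeonhole) argument over the $k$ rounds then converts the in-sample bound into the out-of-sample cumulative bound; passing from $D_{\rm H}^2$ control to $\|\cdot\|_1$ control via the standard inequality relating total-variation and Hellinger distances together with Cauchy--Schwarz accumulates exactly the factor $\frac{S^2AO}{\alpha^2}\sqrt{k(S^2A+SO)}\,\poly(H)$.

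I expect the main obstacle to be precisely this last conversion from in-sample Hellinger control to the online out-of-sample total-variation sum: it is where the weakly revealing condition (and the $\alpha^{-2}$ emission-inversion) is indispensable and where the effective-dimension / elliptical-potential machinery of \cite{Liu2022when} does the heavy lifting. Everything else---the concentration establishing $\theta^\star\in\cB^k$ and the reduction itself---is routine once the POMG-to-POMDP identification is in place.
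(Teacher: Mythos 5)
Your proposal is correct and follows essentially the same route as the paper: Theorem \ref{thm:pomdp-under} is proved by viewing the self-play POMG as a single-agent POMDP for a meta-player with action space of size $A=\prod_i A_i$ and observation space of size $O=\prod_i O_i$, noting via the policy-cancellation identity that the confidence sets depend only on $\log\P_{\hat\theta}(\o_{1:H}\mid\a_{1:H})$, and importing the \textsf{OMLE} guarantee of Appendix E of \cite{Liu2022when}, whose proof permits the $\mu^t$ and $\theta^t\in\cB^t$ to be arbitrary. Your additional sketch of the internals of that imported guarantee (supermartingale concentration over a net for $\theta^\star\in\cB^k$, in-sample Hellinger control, and the emission-inversion plus pigeonhole conversion to the online $\ell_1$ sum) is consistent with \cite{Liu2022when}, which the paper itself treats as a black box.
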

We comment that there are two differences between the optimistic MLE algorithm in \cite{Liu2022when} and the Algorithm \ref{alg:under-equiv} here: (i) the former one is designed for single-player POMGs, i.e., POMDPs while the latter one is for multi-player POMGs; (ii)  $\mu^t$ is computed using different criteria. Nonetheless, we can still reuse their theoretical guarantees proved in their Appendix E without making any change because: (i) in the self-play setting, multi-player POMGs can be viewed as POMDPs with a single meta-player whose action space is of cardinality $A=A_1\times\cdots\times A_n$ and observation space is of cardinality $O=O_1\times\cdots\times O_n$; 
(ii) when proving the second statement in Theorem \ref{thm:pomdp-under}, \cite{Liu2022when} only use the fact that $\tau^t$ is sampled from $\mu^t$ but allow both $\mu^t$ and $\theta^t\in\cB^t$ to be arbitrarily chosen. (The only place \cite{Liu2022when} need to use how $\mu^t$ and $\theta^t$ is computed is in  relating the regret to $\sum_{t=1}^k  \| \P^{\mu^t}_{\theta^t} -\P^{\mu^t}_{\theta^\star} \|_{1}$, which has nothing to do with the proof of Theorem \ref{thm:pomdp-under}.)

\subsubsection{Step 3}
Now let us prove Theorem \ref{thm:under-rate} conditioning on the two relations stated in Theorem \ref{thm:pomdp-under} being true.
To proceed, we define
$$
\up{V}^{k,\mu}_i = \max_{\hat \theta \in\cB^k} V_i^\mu(\hat\theta) \quad \text{ for any } (\mu,k,i)\in\Pidet\times[K]\times[n].
$$
Note that conditioning on the first relation in Theorem \ref{thm:pomdp-under}, we always have $\up{V}^{k,\mu}_i  \ge V_i^\mu$ for all $(\mu,k,i)\in\Pidet\times[K]\times[n]$ because by definition $V_i^\mu = V_i^\mu(\theta^\star)$.
\paragraph{Nash equilibrium} 
When we choose \textsc{Equilibrium} in Subroutine \ref{subalg} to be  Nash equilibrium, 
by the definition of Nash-regret,
\begin{equation}\label{eq:proof-1}
\begin{aligned}
    \reg_{\rm Nash}(K) &= \sum_{k} \max_i \left(\max_{\mu_i\in\Pidet_i} V_i^{\mu_i\times\pi_{-i}^k}- V_i^{\pi^k}\right)\\
    & =\sum_{k} \max_i \left(\max_{\mu_i\in\Pidet_i} \E_{\mu_{-i}\sim\pi_{-i}^k} \left[V_i^{\mu_i\times\mu_{-i}}\right]- V_i^{\pi^k}\right)\\
    & \le \sum_{k} \max_i \left(\max_{\mu_i\in\Pidet_i} \E_{\mu_{-i}\sim\pi_{-i}^k} \left[\up{V}_i^{k,\mu_i\times\mu_{-i}}\right]- V_i^{\pi^k}\right)\\
    & = \sum_{k} \max_i \left( \E_{\mu\sim\pi^k} \left[\up{V}_i^{k,\mu}\right]- \E_{\mu\sim\pi^k} \left[V_i^{\mu}\right]\right),
\end{aligned}
\end{equation}
where the final equality uses the fact that $\pi^k$ is a Nash equilibrium of the normal-form game defined by $(\up{V}_1^k,\ldots,\up{V}_n^k)$ as described in Subroutine \ref{subalg}.
By Jensen's inequality and Azuma-Hoeffding inequality, 
\begin{align*}
    &\quad \sum_{k} \max_i \left( \E_{\mu\sim\pi^k} \left[\up{V}_i^{k,\mu}\right]- \E_{\mu\sim\pi^k} \left[V_i^{\mu}\right]\right)\\
    &  \le \sum_{k}  \E_{\mu\sim\pi^k} \left[\max_i \left( \up{V}_i^{k,\mu}- V_i^{\mu}\right)\right]\\
    &  \le \sum_{k}  \max_i \left( \up{V}_i^{k,\mu^k}- V_i^{\mu^k}\right)+ \tilde{\cO}(H\sqrt{K})\\
    & = \sum_{k}  \max_i \left( \max_{\hat\theta\in\cB^k} V_i^{\mu^k}(\hat\theta)- V_i^{\mu^k}\right)+ \tilde{\cO}(H\sqrt{K}) \\
    & \le H\sum_{k}    \max_{\hat\theta\in\cB^k} \left\|\P^{\mu^k}_{\hat\theta}- \P^{\mu^k}_{\theta^\star} \right\|_1+ \tilde{\cO}(H\sqrt{K}),
\end{align*}
where the last equality uses the definition of $\up{V}^k$ and the last inequality uses the fact that the reward is an $H$-bounded function of the trajectory. Finally, we complete the proof by using the second relation in Theorem \ref{thm:pomdp-under}, which upper bounds $\sum_{k}    \max_{\hat\theta\in\cB^k} \left\|\P^{\mu^k}_{\hat\theta}- \P^{\mu^k}_{\theta^\star} \right\|_1$ by
$ \tilde{\cO}\left( \frac{S^{2}AO}{\alpha^2} \sqrt{K(S^2A+SO)} \times \poly(H)\right)$. 

\paragraph{Coarse correlated equilibrium} 
When we choose \textsc{Equilibrium} in Subroutine \ref{subalg} to be  CCE, the proof is exactly the same as for Nash equilibrium, except that the last equality in Equation \eqref{eq:proof-1} becomes ``no larger than'' by the definition of CCE.

\paragraph{Correlated equilibrium} When we choose \textsc{Equilibrium} in Subroutine \ref{subalg} to be  CE, 
by the definition of CE-regret,
\begin{equation}\label{eq:proof-2}
\begin{aligned}
    \reg_{\rm CE}(K) &= \sum_{k} \max_i \left(\max_{\phi_i} V_i^{(\phi_i\diamond\pi_i^k)\odot \pi_{-i}^k}- V_i^{\pi^k}\right)\\
    & =\sum_{k} \max_i \left(\max_{\phi_i}  \E_{\mu\sim\pi^k} \left[V_i^{(\phi_i\diamond\mu_i)\times \mu_{-i}}\right]- V_i^{\pi^k}\right)\\
    & \le \sum_{k} \max_i \left(\max_{\phi_i}  \E_{\mu\sim\pi^k} \left[\up{V}_i^{k,(\phi_i\diamond\mu_i)\times \mu_{-i}}\right]- V_i^{\pi^k}\right)\\
    & = \sum_{k} \max_i \left( \E_{\mu\sim\pi^k} \left[\up{V}_i^{k,\mu}\right]- \E_{\mu\sim\pi^k} \left[V_i^{\mu}\right]\right),
\end{aligned}
\end{equation}
where the second equality uses the definition of strategy modification, and the final equality uses the fact that $\pi^k$ is a CE of the normal-form game defined by $(\up{V}_1^k,\ldots,\up{V}_n^k)$ as described in Subroutine \ref{subalg}.
The remaining steps are the same as of the proof for Nash-regret.

\subsection{Proof of Theorem \ref{thm:over}}

In this section, we prove  Theorem \ref{thm:over} with a specific  polynomial dependency as stated in the following theorem.
\begin{theorem}
    \emph{(Total suboptimality of \algNameMulti)}
    \label{thm:over-rate}
    Under Assumption \ref{asp:over},
	there exists an absolute constant $c$ such that for any $\delta\in(0,1]$ and $K\in\N$, Algorithm \ref{alg:over} with 
	$$\beta = c\left(H(S^2A+SO)\log(SAOHK)+\log(K/\delta)\right)$$ and \textsc{Equilibrium} being one of $\{\text{Nash, CCE, CE}\}$ satisfies (respectively) that  
	with probability at least $1-\delta$, 
	$$\textstyle \reg_{\{\rm Nash,CCE,CE\}}(k) \le  \tilde{\mathcal{O}}\left( \frac{S^2 A^{3m-2} }{\alpha^2}\sqrt{k(S^2A+SO)}\times\poly(H)\right)  \qquad \text{ for all }k\in[K],$$
    where the regret is computed for policy $\pi^1,\ldots,\pi^k$.
\end{theorem}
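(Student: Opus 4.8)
The plan is to mirror the three-step argument used for Theorem~\ref{thm:under-rate}, replacing the single-step density-estimation ingredient by its $m$-step counterpart imported from \cite{Liu2022when}, and inserting one extra ``simulation'' step to bridge the gap between the policies actually executed and the policies on which regret is measured. For Step~1 I would first recast Algorithm~\ref{alg:over} into an equivalent form exactly as in Section~\ref{step:1}: in each episode $k$ draw a single deterministic joint policy $\mu^k\sim\pi^k$, and for every $h\in\{0,\ldots,H-m\}$ execute the composite exploration policy $\rho^{k,h}:=\mu^k_{1:h}\circ\text{uniform}(\cA)$ to collect $\tau^{k,h}$. This rewriting is legitimate because the likelihood-ratio dependence on the data-collection policy cancels on both sides of the MLE confidence-set inequality (the same cancellation displayed in Section~\ref{step:1}), so inserting the dummy deterministic $\mu^k$ leaves $\cB^{k+1}$ unchanged; and because the game-theoretic regret is evaluated only on $\pi^k$, a single representative $\mu^k\sim\pi^k$ suffices, reused across all $h$.

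For Step~2 I would treat the self-play POMG as a single meta-player POMDP with action set of size $A=\prod_i A_i$ and observation set of size $O=\prod_i O_i$, and invoke the overcomplete OMLE guarantee of \cite{Liu2022when} (their multi-step analysis). As in the undercomplete case, their proof only uses that each $\tau^{t,h}$ is sampled from the corresponding exploration policy while allowing $\mu^t$ and $\theta^t\in\cB^t$ to be arbitrary, so it transfers verbatim. This yields, with probability $1-\delta$ and simultaneously for all $k$ and all $\theta^1\in\cB^1,\ldots,\theta^K\in\cB^K$, that $\theta^\star\in\cB^k$ and
$$
\sum_{t=1}^k\sum_{h=0}^{H-m}\big\|\P^{\rho^{t,h}}_{\theta^t}-\P^{\rho^{t,h}}_{\theta^\star}\big\|_1\le \tilde{\cO}\big(\poly(S,A^m,O,H,\alpha^{-1})\cdot\sqrt{k}\big),
$$
where the $m$-step emission matrix $\M_h$ of dimension $A^{m-1}O^m\times S$ enters the $\poly$ factor through its $S$-th singular value bound $\alpha$ from Assumption~\ref{asp:over}.

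For Step~3 I would reuse the game-theoretic reduction of Equations~\eqref{eq:proof-1} and \eqref{eq:proof-2} essentially verbatim, since Subroutine~\ref{subalg} and the optimistic values $\up{V}^{k,\mu}_i=\max_{\hat\theta\in\cB^k}V^\mu_i(\hat\theta)$ are identical here: optimism gives $\up{V}^{k,\mu}_i\ge V^\mu_i$, the equilibrium property of $\pi^k$ collapses the two expectations, Jensen moves $\max_i$ inside $\E_{\mu\sim\pi^k}$, and Azuma--Hoeffding replaces $\E_{\mu\sim\pi^k}$ by the realized $\mu^k$ at cost $\tilde{\cO}(H\sqrt{k})$. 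This reduces $\reg_{\{\rm Nash,CCE,CE\}}(k)$ to $\sum_k\max_i\big(\up{V}^{k,\mu^k}_i-V^{\mu^k}_i\big)+\tilde{\cO}(H\sqrt{k})$. The one genuinely new step is bounding the per-policy overestimation $\up{V}^{k,\mu^k}_i-V^{\mu^k}_i=\max_{\hat\theta\in\cB^k}\big(V^{\mu^k}_i(\hat\theta)-V^{\mu^k}_i(\theta^\star)\big)$: the single-step estimate $H\|\P^{\mu^k}_{\hat\theta}-\P^{\mu^k}_{\theta^\star}\|_1$ no longer suffices because one cannot identify the model from $\mu^k$ alone. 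Instead I would invoke the $m$-step simulation lemma of \cite{Liu2022when}, which (using the identifiability guaranteed by $\sigma_S(\M_h)\ge\alpha$) controls the value gap of \emph{any} policy $\mu^k$ by the $m$-step exploration density errors $\sum_h\|\P^{\rho^{k,h}}_{\hat\theta}-\P^{\rho^{k,h}}_{\theta^\star}\|_1$, at the price of an $A^{m-1}$ importance-weighting factor over the uniformly-random action suffixes. Summing over $k$ and plugging in Step~2 then produces the claimed $A^{3m-2}$ rate.

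The main obstacle is precisely this last bridge, which is the only place the overcomplete structure forces a departure from the undercomplete proof: the density errors that Step~2 controls are those of the exploration policies $\rho^{k,h}$, not of $\mu^k$ itself, so the burden is to certify that the $m$-step simulation lemma applies off-the-shelf to the meta-player POMDP and that its $A^{m-1}$ importance weighting, compounded with the $A^{m-1}O^m$ dimension of $\M_h$ entering Step~2, is exactly what yields the $A^{3m-2}$ dependence. Once this is verified, the remainder is a faithful transcription of the undercomplete argument, and the same online-to-batch conversion delivers the corollary.
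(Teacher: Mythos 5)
Your proposal is correct and follows essentially the same three-step route as the paper: rewrite Algorithm \ref{alg:over} with a single sampled deterministic $\mu^k$ reused across the $H-m+1$ exploration policies, import the multi-step OMLE guarantee from \cite{Liu2022when} for the meta-player POMDP, and rerun the game-theoretic reduction of Step 3 unchanged. The only difference is one of packaging: the paper cites the imported guarantee (Theorem \ref{thm:pomdp-over}) already at the level of $\sum_t\|\P^{\mu^t}_{\theta^t}-\P^{\mu^t}_{\theta^\star}\|_1$, so its Step 3 is literally verbatim, whereas you split that citation into an exploration-policy density bound plus an $m$-step simulation lemma with $A^{m-1}$ importance weighting---a finer-grained account of the same black box, and the $A^{3m-2}$ bookkeeping you defer to \cite{Liu2022when} is exactly what that reference supplies.
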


The proof of Theorem \ref{thm:over-rate} follows basically the same arguments as in the undercomplete setting, except that we replace Algorithm \ref{alg:under-equiv} with Algorithm \ref{alg:over-equiv} \footnote{We remark that in each episode $k$ of Algorithm \ref{alg:over}, we  sample the random seed $\omega$ used in $\pi^k$ \emph{only once} and then combine $\pi^k(\omega,\cdot)$ with random actions starting from different in-episode steps to collect multiple trajectories (Line \ref{alg2-4}-\ref{alg2-5} in Algorithm \ref{alg:over}). 
Therefore, Algorithm \ref{alg:over-equiv} and Algorithm \ref{alg:over} are \emph{equivalent} for the same reasons  as explained in Section \ref{step:1}.} and Theorem \ref{thm:pomdp-under} with Theorem \ref{thm:pomdp-over} in the first two steps. 
And the third step is exactly the same.
To avoid noninformative repetitive arguments, here we only state Algorithm \ref{alg:over-equiv} and Theorem \ref{thm:pomdp-over}, while one can directly verify all the proofs in Section \ref{subsec:proof-under} still hold after we make the aforementioned replacements.

\begin{algorithm}[t]
    \caption{\algNameMulti}
 \begin{algorithmic}[1]\label{alg:over-equiv}
\STATE \textbf{Initialize:} $\cB^1 = \{ \hat \theta\in\Theta:~\min_h\sigma_{S}(\hat\M_h)\ge \alpha\}$, $\cD=\{\}$ 
\FOR{$k=1,\ldots,K$}
\STATE compute $\pi^k=$\textsf{Optimistic\_Equilibrium}$(\cB^k)$ and \cblue{sample $\mu^k$ from $\pi^k$ }
    \FOR{$h=0,\ldots,H-m$} 
    \STATE execute policy $\cblue{\mu^k_{1:h}}\circ \text{uniform}(\mathcal{\cA})$  to collect a trajectory $\tau^{k,h}$ \\
    then add  $(\cblue{\mu^k_{1:h}}\circ \text{uniform}(\mathcal{\cA}),\tau^{k,h})$ into $\cD$ 
   \ENDFOR
    \STATE update 
    $$
    \cB^{k+1} = \bigg\{\hat\theta \in \Theta: \sum_{(\pi,\tau)\in\cD} \log \P_{{\hat\theta}}^{\pi} (\tau)
    \ge \max_{ \theta' \in\Theta} \sum_{(\pi,\tau)\in\cD} \log \P^{\pi}_{{\theta'}}(\tau) -\beta \bigg\} \bigcap \cB^1
    $$ 
    \ENDFOR
 \end{algorithmic}
 \end{algorithm}

 \begin{theorem} \emph{(\cite{Liu2022when})}\label{thm:pomdp-over}
    Under Assumption \ref{asp:over} and the same choice of $\beta$ as in Theorem \ref{thm:over}, with probability at least $1-\delta$, Algorithm \ref{alg:over-equiv} satisfies that for \textbf{all} $k\in[K]$ and \textbf{all} $\theta^1\in\cB^1$,\dots,$\theta^K\in\cB^K$
    \begin{itemize}
        \item $\theta^\star\in\cB^k$ ,
        \item $\sum_{t=1}^k  \| \P^{\mu^t}_{\theta^t} -\P^{\mu^t}_{\theta^\star} \|_{1} \le \tilde{\mathcal{O}}\left( \frac{S^2 A^{3m-2} }{\alpha^2}\sqrt{k(S^2A+SO)}\times\poly(H)\right) $.
    \end{itemize}
    \end{theorem}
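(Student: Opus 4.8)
The plan is to establish both bullets by reducing the self-play multi-agent problem to a single-agent POMDP and then running the overcomplete optimistic-MLE analysis of \cite{Liu2022when} essentially verbatim. The key observation is that, in self-play, the $n$-player POMG behaves like a POMDP with a meta-action space of cardinality $A=\prod_i A_i$ and a meta-observation space of cardinality $O=\prod_i O_i$; the deterministic joint policy $\mu^t$ plays the role of a single agent's history-dependent policy, and the trajectory law $\P^{\mu^t}_\theta$ is exactly the corresponding POMDP trajectory law. Since the statement only concerns the trajectory distributions $\P^{\mu^t}_\theta$ and never refers to any individual player's payoff, the multi-agent structure is inert, and the argument never needs to know \emph{how} $\mu^t$ or $\theta^t$ was selected (only that $\tau^{t,h}$ was sampled from the executed policy and that $\theta^t\in\cB^t$). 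Hence it suffices to reprove the two bullets for the single-agent multi-step machinery applied to Algorithm~\ref{alg:over-equiv}.

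For the first bullet (optimism), I would show that $\theta^\star$ survives the relaxed-MLE filter. The quantity $\sum_{t}\big(\max_{\theta'}\log\P^{\mu^t}_{\theta'}(\tau^t)-\log\P^{\mu^t}_{\theta^\star}(\tau^t)\big)$ is controlled by a standard martingale-plus-covering argument: for a fixed candidate $\theta'$, the conditional expectation of $\log(\P^{\mu^t}_{\theta'}/\P^{\mu^t}_{\theta^\star})$ is $-\mathrm{KL}$, and the usual exponential-moment bound for likelihood ratios yields concentration without requiring the log-likelihoods themselves to be bounded. Taking a union bound over an $\epsilon$-net of the parameter space $\Theta$, whose log-covering number scales as $H(S^2A+SO)\log(\cdot)$ (matching the transition plus emission degrees of freedom), gives that with probability $1-\delta$ the maximum log-likelihood exceeds that of $\theta^\star$ by at most $\beta$, simultaneously for all $k$. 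This is precisely the stated choice of $\beta$ and certifies $\theta^\star\in\cB^k$ for every $k$.

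For the second bullet (cumulative estimation error), I would first invoke the in-sample MLE guarantee: since both $\theta^t$ and $\theta^\star$ lie in $\cB^t$, each is $\beta$-optimal in log-likelihood on the data collected before episode $t$, which by the standard MLE lemma bounds the cumulative squared Hellinger error over the \emph{executed exploratory} policies $\bar\mu^t=\mu^t_{1:h}\circ\text{uniform}(\cA)$ by $\lesssim\beta$. Converting Hellinger to total variation and applying Cauchy--Schwarz turns this into $\sum_{t\le k}\norm{\P^{\bar\mu^t}_{\theta^t}-\P^{\bar\mu^t}_{\theta^\star}}_1\lesssim\sqrt{k\beta}$. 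The remaining, genuinely multi-step-specific work is to transfer this control from the exploratory policies back to the target policy $\mu^t$. Here I would use Assumption~\ref{asp:over}: since $\sigma_S(\M_h)\ge\alpha$, each $m$-step emission–action matrix is left-invertible with $\norm{\M_h^\dagger}\le 1/\alpha$, so an approximate match of the $m$-step observation–action statistics reconstructs the belief states up to an $\alpha^{-1}$ factor. Propagating this reconstruction across the $H$ steps, while paying the importance-weighting overhead of uniform action exploration (probability $A^{-(m-1)}$ on any length-$(m-1)$ action block), amplifies the error by the $A^{3m-2}\alpha^{-2}$ factor and delivers the claimed bound on $\sum_{t\le k}\norm{\P^{\mu^t}_{\theta^t}-\P^{\mu^t}_{\theta^\star}}_1$.

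The main obstacle is precisely this last transfer step. The optimism argument and the in-sample Hellinger bound are routine once the POMDP reduction is in place, but controlling how per-step discrepancies in the $m$-step emission statistics propagate into a discrepancy of the \emph{full-horizon, target-policy} trajectory law, without blowing up worse than $A^{3m-2}\alpha^{-2}$, is the technical heart inherited from the overcomplete analysis of \cite{Liu2022when}. It hinges on the pseudo-inverse belief-reconstruction identity under Assumption~\ref{asp:over} together with a careful accounting of the exploration reweighting; everything else is a direct instantiation of the single-agent machinery, which is why the statement can be imported with no change beyond the self-play reduction.
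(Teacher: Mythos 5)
Your proposal is correct and follows essentially the same route as the paper: the paper likewise proves this theorem by viewing the self-play POMG as a single-agent POMDP with meta-action space of size $A=\prod_i A_i$ and meta-observation space of size $O=\prod_i O_i$, noting that the argument needs only that $\tau^{k,h}$ is sampled from the executed policy and that $\theta^t\in\cB^t$ (not how $\mu^t$ or $\theta^t$ is chosen), and then importing the multi-step optimistic-MLE analysis of \cite{Liu2022when} verbatim. Your additional sketch of the internals of that imported analysis (covering-plus-martingale optimism, in-sample Hellinger bound, and the pseudo-inverse transfer under Assumption~\ref{asp:over}) is consistent with what the cited analysis does, though the paper itself does not reproduce those steps.
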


 We remark that Theorem \ref{thm:pomdp-over} follows directly from instantiating  the analysis of multi-step optimistic MLE (Appendix F in \cite{Liu2022when})  on Algorithm \ref{alg:over-equiv}.

\section{Proofs for Playing against Adversarial Opponents}

\subsection{Proof of Theorem \ref{thm:adv}}

In this section, we prove  Theorem \ref{thm:adv} with a specific  polynomial dependency as stated in the following theorem.
\begin{theorem}\emph{(Regret of \algAdv)}\label{thm:adv-rate}
    Under Assumption \ref{asp:under},
	there exists an absolute constant $c$ such that for any $\delta\in(0,1]$ and $K\in\N$, Algorithm \ref{alg:adv} with $\beta = c\left(H(S^2A+SO)\log(SAOHK)+\log(K/\delta)\right)$ satisfies  that  
	with probability at least $1-\delta$, 
	$$\textstyle
    \sum_{t=1}^k \left(\max_{\tilde \pi_1}\min_{\tilde \pi_2} V_1^{\tilde\pi_1 \times \tilde\pi_2}- V_{1}^{\pi^t}\right) 
    \le \tilde{\cO}\left( \frac{S^{2}AO}{\alpha^2} \sqrt{k(S^2A+SO)} \times \poly(H)\right)\quad \text{for all $k\in[K]$}.
    $$
\end{theorem}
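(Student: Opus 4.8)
The plan is to run the same optimism-plus-OMLE template as the self-play proof, but to replace the equilibrium subroutine argument by an elementary max-min inequality. Throughout I write $\pi_{-1}$ for the opponents' joint policy (this is the ``player $2$'' of the theorem statement, i.e.\ the coalition of all players other than the learner), set $V^\star := \max_{\tilde\pi_1}\min_{\tilde\pi_{-1}} V_1^{\tilde\pi_1\times\tilde\pi_{-1}}(\theta^\star)$ for the max-min value in the true model, and let $(\theta^k,\pi_1^k)$ denote the optimistic model--policy pair computed in Line \ref{alg:3-1}.

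First I would transfer the density-estimation guarantee to this setting. The confidence set \eqref{eq:adv-Bset} is built from the same relaxed-MLE likelihood as the self-play algorithm, and the conditional-likelihood identity displayed just after Algorithm \ref{alg:adv} shows that the likelihood depends on the data only through $\P_{\hat\theta}(\o_{1:H}\mid\a_{1:H})$, hence it is \emph{independent} of the (possibly adversarial, possibly adaptive) policies $\pi^t$ actually played. Since the analysis of \cite{Liu2022when} only uses that each trajectory $\tau^t$ is sampled from the executed policy under $\theta^\star$ and otherwise allows both the executed policies and the comparison models $\theta^t\in\cB^t$ to be arbitrary, I can invoke Theorem \ref{thm:pomdp-under} verbatim: with probability at least $1-\delta$, for all $k\in[K]$ we have $\theta^\star\in\cB^k$, and for any sequence $\theta^t\in\cB^t$,
$$\sum_{t=1}^k \|\P^{\pi^t}_{\theta^t}-\P^{\pi^t}_{\theta^\star}\|_1 \le \tilde{\cO}\!\left(\tfrac{S^2AO}{\alpha^2}\sqrt{k(S^2A+SO)}\times\poly(H)\right).$$

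The second step is optimism. Conditioning on $\theta^\star\in\cB^k$, the optimistic choice in Line \ref{alg:3-1} gives
$$\min_{\tilde\pi_{-1}} V_1^{\pi_1^k\times\tilde\pi_{-1}}(\theta^k) = \max_{\hat\pi_1}\min_{\hat\pi_{-1}}V_1^{\hat\pi_1\times\hat\pi_{-1}}(\theta^k) \ge \max_{\hat\pi_1}\min_{\hat\pi_{-1}}V_1^{\hat\pi_1\times\hat\pi_{-1}}(\theta^\star) = V^\star,$$
the inequality holding because $\theta^k$ maximizes the max-min value over $\cB^k\ni\theta^\star$. For the played joint policy $\pi^t=\pi_1^t\times\pi_{-1}^t$, dropping the inner minimization and then applying the value-difference bound from the Notations section yields
$$V^\star - V_1^{\pi^t}(\theta^\star) \le V_1^{\pi^t}(\theta^t) - V_1^{\pi^t}(\theta^\star) \le H\|\P^{\pi^t}_{\theta^t}-\P^{\pi^t}_{\theta^\star}\|_1.$$
Summing over $t\le k$ and plugging in the estimation bound with the choice $\theta^t=\theta^t$ (the optimistic model, which lies in $\cB^t$) gives the claimed $\tilde{\cO}(\sqrt{k})$ rate, with exactly the stated dependence.

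The main obstacle—indeed the only place where the adversarial nature genuinely matters—is justifying that the OMLE guarantee of Theorem \ref{thm:pomdp-under} survives the opponents choosing $\pi_{-1}^t$ adversarially and adaptively. This rests on two points I would verify carefully: (i) the confidence set is policy-independent via the conditional-likelihood identity, so adversarial opponents cannot corrupt the MLE estimate; and (ii) the estimation bound of \cite{Liu2022when} is a martingale argument proved for an \emph{arbitrary} predictable sequence of executed policies, requiring only that $\tau^t\sim\P^{\pi^t}_{\theta^\star}$—which holds here precisely because the game-replay setting reveals the full trajectory $\tau^t$ (including the opponents' observations and actions) after each episode. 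Granting these, everything else is a verbatim reuse of the single-agent optimistic-MLE machinery, with the equilibrium-computation step of the self-play proof replaced by the one-line max-min inequality above.
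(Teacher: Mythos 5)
Your proposal is correct and follows essentially the same route as the paper: invoke the policy-independent OMLE confidence-set guarantee (the paper's Theorem~\ref{thm:pomdp-under-adv}), use optimism of the max-min value over $\cB^k$ together with the definition of $\pi_1^k$ to drop to $V_1^{\pi^t}(\theta^t)$, and then bound the value gap by $H\|\P^{\pi^t}_{\theta^t}-\P^{\pi^t}_{\theta^\star}\|_1$ and sum. Your explicit justification that the estimation bound tolerates adversarially and adaptively chosen $\pi_{-1}^t$ is exactly the point the paper delegates to its Step~2 discussion, so nothing is missing.
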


To begin with, for the same reasons as explained in Section \ref{step:2}, we can directly instantiate the guarantees for optimistic MLE (Appendix E in \citep{Liu2022when}) on Algorithm \ref{alg:adv} and obtain:
\begin{theorem} \emph{(\cite{Liu2022when})}\label{thm:pomdp-under-adv}
    Under Assumption \ref{asp:under} and the same choice of $\beta$ as in Theorem \ref{thm:adv}, with probability at least $1-\delta$, Algorithm \ref{alg:adv} satisfies that for \textbf{all} $k\in[K]$ and \textbf{all} $\theta^1\in\cB^1$,\dots,$\theta^K\in\cB^K$
    \begin{itemize}
        \item $\theta^\star\in\cB^k$ ,
        \item $\sum_{t=1}^k  \| \P^{\pi^t}_{\theta^t} -\P^{\pi^t}_{\theta^\star} \|_{1} \le   \tilde{\cO}\left( \frac{S^{2}AO}{\alpha^2} \sqrt{k(S^2A+SO)} \times \poly(H)\right)$.
    \end{itemize}
    \end{theorem}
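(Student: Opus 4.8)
The plan is to prove Theorem \ref{thm:pomdp-under-adv} by reducing the adversarial multi-agent problem to the single-agent \omle framework of \cite{Liu2022when} and instantiating its guarantees, exactly as the self-play Theorem \ref{thm:pomdp-under} does. The reduction rests on two facts. First, by the likelihood-ratio cancellation recorded in the identity preceding the theorem, the confidence-set update \eqref{eq:adv-Bset} depends on the data only through the \emph{policy-independent} conditional likelihoods $\P_{\hat\theta}(\o^t_{1:H}\mid \a^t_{1:H})$. Second, we may therefore treat the $n$ players as a single ``meta-player'' whose action space is $\cA=\cA_1\times\cdots\times\cA_n$ of cardinality $A$ and whose observation space is $\cO=\cO_1\times\cdots\times\cO_n$ of cardinality $O$. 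Under this identification a POMG becomes a POMDP, the joint emission matrix $\O_h\in\R^{O\times S}$ is exactly the POMDP emission matrix, and Assumption \ref{asp:under} ($\min_h\sigma_S(\O_h)\ge\alpha$) is precisely the undercomplete weakly-revealing condition of \cite{Liu2022when}.

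First I would establish the first bullet, confidence-set validity $\theta^\star\in\cB^k$ for all $k$. This is the standard relaxed-MLE deviation bound: via an exponential martingale argument over a $\varepsilon$-net of the parameter space $\Theta$, the log-likelihood of $\theta^\star$ stays within $\beta$ of the maximum simultaneously for all $k$, with probability at least $1-\delta$. The choice $\beta=c(H(S^2A+SO)\log(SAOHK)+\log(K/\delta))$ is exactly the metric entropy of $\Theta$, since there are $H$ transition tensors carrying $S^2A$ free parameters each and $H$ emission matrices carrying $SO$ free parameters each, so $\log N_\Theta\lesssim H(S^2A+SO)\log(\cdot)$. Because the opponents may select $\pi^k_{-1}$ adaptively, the executed policies $\pi^1,\pi^2,\ldots$ form an arbitrary predictable sequence; but the martingale in \cite{Liu2022when} is taken with respect to the filtration generated by the collected trajectories, so it tolerates this adaptivity without modification.

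Next I would import the cumulative-Hellinger guarantee that is the heart of Appendix E of \cite{Liu2022when}. Conditioned on validity, every $\theta\in\cB^k$ satisfies $\sum_{t<k} D_{\mathrm H}^2(\P^{\pi^t}_{\theta},\P^{\pi^t}_{\theta^\star})\lesssim\beta$, and the weakly-revealing condition upgrades this observable-level closeness to the stated trajectory-level bound: the factor $\|\O_h^{+}\|\le 1/\alpha$ enters through the belief-propagation step converting observation-distribution closeness into model closeness, supplying the $S^2AO/\alpha^2$ prefactor. Applying $\|\cdot\|_1\le\sqrt{2}\,D_{\mathrm H}$ followed by Cauchy--Schwarz across the $k$ rounds then yields the second bullet, $\sum_{t=1}^k\|\P^{\pi^t}_{\theta^t}-\P^{\pi^t}_{\theta^\star}\|_1\le\tilde{\cO}(\frac{S^{2}AO}{\alpha^2}\sqrt{k(S^2A+SO)}\times\poly(H))$, for arbitrary $\theta^t\in\cB^t$.

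The main obstacle is not any single calculation but verifying that the reduction is lossless, i.e.\ that no multi-agent or adversarial feature violates a hypothesis of the cited POMDP theorem. The two points needing care are: (i) the likelihood-ratio cancellation must render the confidence set independent of the (possibly correlated, possibly adversarial) joint policy, so that the single-agent concentration genuinely applies to the conditionals $\P(\o_{1:H}\mid\a_{1:H})$; and (ii) the \cite{Liu2022when} guarantee must hold for an arbitrary, even adaptively chosen, sequence of data-collection policies---indeed its proof uses only that $\tau^t$ is sampled from $\pi^t$ while allowing $\pi^t$ and $\theta^t\in\cB^t$ to be arbitrary, exactly as noted in Step 2 of Section \ref{subsec:proof-under}---so the adversary's control of $\pi^k_{-1}$ causes no difficulty. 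Once these two checks are confirmed, the theorem follows directly.
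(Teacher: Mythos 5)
Your proposal is correct and takes essentially the same route as the paper: the paper likewise views the $n$ players as a single meta-player with joint action space of size $A$ and joint observation space of size $O$, uses the likelihood-ratio cancellation to make the confidence set \eqref{eq:adv-Bset} policy-independent, and then directly instantiates the OMLE guarantee of Appendix E of \cite{Liu2022when}, noting that the analysis only uses that $\tau^t$ is sampled from $\pi^t$ while allowing $\pi^t$ (hence the adversarially chosen $\pi^k_{-1}$) and $\theta^t\in\cB^t$ to be arbitrary. Your two reduction checks are precisely the paper's two remarks, and your sketch of the internals of \cite{Liu2022when} (martingale concentration for confidence-set validity, the weakly-revealing Hellinger-to-$\ell_1$ upgrade for the cumulative bound) is simply detail the paper imports wholesale rather than reproves.
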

Now conditioning on the two relations in Theorem \ref{thm:pomdp-under-adv} being true, we have 
\begin{align*}
&\sum_{k}\left( \max_{\hat\pi_1} \min_{\hat\pi_{-1}} V_1^{\hat\pi_1\times\hat\pi_{-1}}- V_1^{\pi_1^k\times \pi_{-1}^k}\right)\\
= & \sum_{k}\left( \max_{\hat\pi_1} \min_{\hat\pi_{-1}} V_1^{\hat\pi_1\times\hat\pi_{-1}}(\theta^\star)- 
\max_{\hat\theta\in\cB^k}\max_{\hat\pi_1} \min_{\hat\pi_{-1}} V_1^{\hat\pi_1\times\hat\pi_{-1}}(\hat\theta)\right)\\
&+ 
\sum_{k}\left( \max_{\hat\theta\in\cB^k}\max_{\hat\pi_1} \min_{\hat\pi_{-1}} V_1^{\hat\pi_1\times\hat\pi_{-1}}(\hat\theta)-
V_1^{\pi_1^k\times \pi_{-1}^k}(\theta^\star)\right)\\
\text{$\theta^\star\in\cB^k$}\quad \le &  \sum_{k}\left( \max_{\hat\theta\in\cB^k}\max_{\hat\pi_1} \min_{\hat\pi_{-1}} V_1^{\hat\pi_1\times\hat\pi_{-1}}(\hat\theta)-
V_1^{\pi_1^k\times \pi_{-1}^k}(\theta^\star)\right)\\
\text{by the definition of $\pi^k_1$}\quad
= &  \sum_{k}\left( \max_{\hat\theta\in\cB^k} \min_{\hat\pi_{-1}} V_1^{\pi_1^k\times\hat\pi_{-1}}(\hat\theta)-
V_1^{\pi_1^k\times \pi_{-1}^k}(\theta^\star)\right)\\
\le &  \sum_{k}\left( \max_{\hat\theta\in\cB^k}  V_1^{\pi_1^k\times\pi_{-1}^k}(\hat\theta)-
V_1^{\pi_1^k\times \pi_{-1}^k}(\theta^\star)\right)\\
\text{reward per episode $\in[0,H]$}\quad
\le & H\sum_{k}  \max_{\hat\theta\in\cB^k}  \left\| \P_{\hat\theta}^{\pi^k} - \P_{\theta^\star}^{\pi^k} \right\|_1 \\
\text{Theorem \ref{thm:pomdp-under-adv}}\quad\le &  \tilde{\cO}\left( \frac{S^{2}AO}{\alpha^2} \sqrt{k(S^2A+SO)} \times \poly(H)\right).
\end{align*}

\subsection{Proof of Theorem \ref{thm:lowerbound}}

\begin{figure}[H]
    \center
    \includegraphics[width=0.8\textwidth]{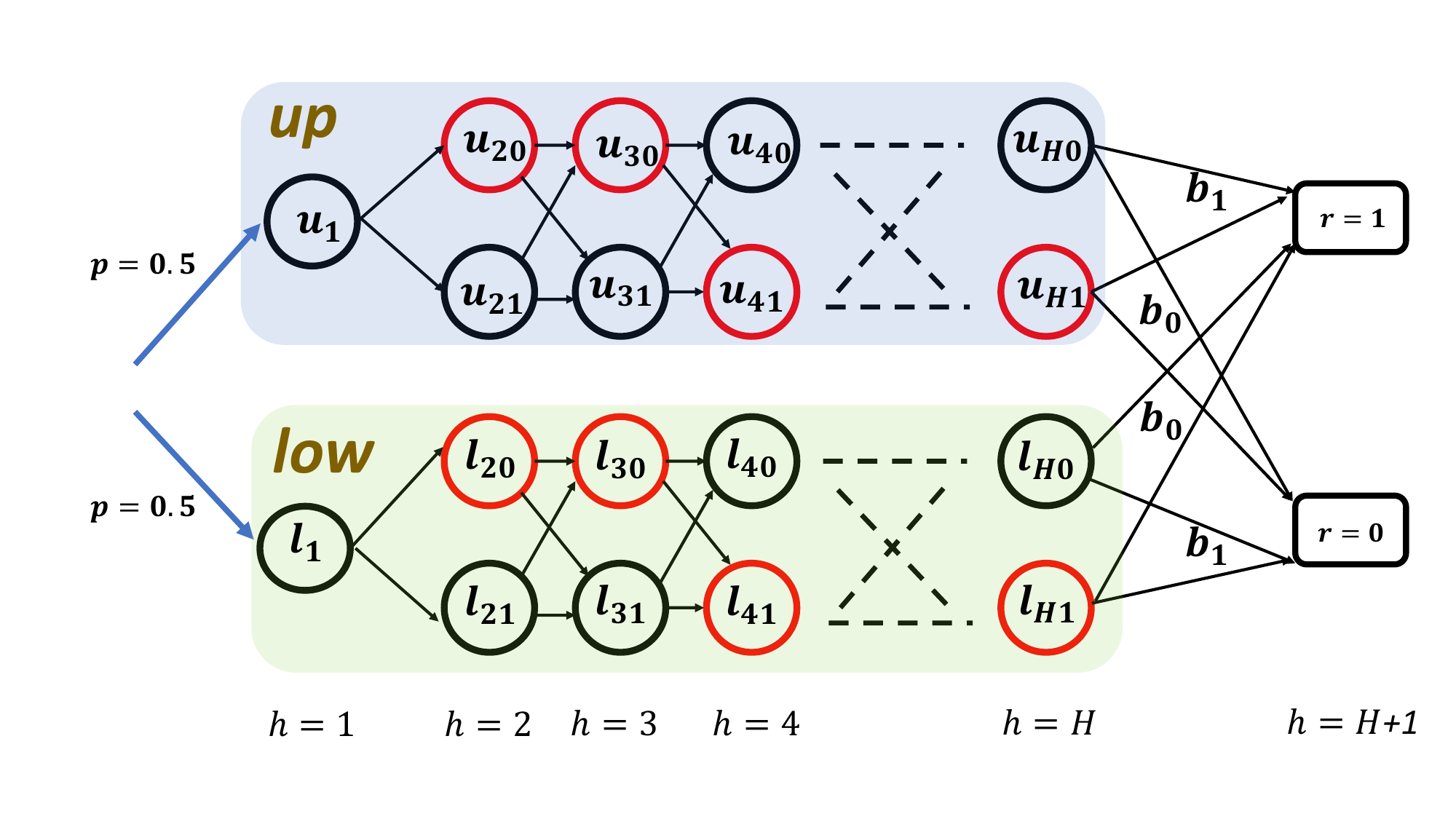}
    \caption{hard instance for Theorem \ref{thm:lowerbound}.}\label{figure}
\end{figure}

The hard instance is best illustrated by Figure \ref{figure} where we sketch the transition dynamics of the POMG. Below we elaborate the construction based on Figure \ref{figure}.
\begin{itemize}
    \item \textbf{States and actions.} Each circle and rectangle in Figure \ref{figure} represents a state. Each player has two  actions denoted by $\cA=\{a_0,a_1\}$ and $\cB=\{b_0,b_1\}$ respectively.
    \item \textbf{Observations.} The max-player can always directly observe the current latent state. The min-player observes the same dummy observation $o_{\rm null}$ in any black circle while directly observes the current state in any red circle and any rectangle. It is easy to verify by definition  $\min_h\sigma_{\min}(\O_h)=1$ because the emission structure is a bijection between the joint observation space and  the latent state space.
    \item \textbf{Reward.} Only the upper rectangle emits an observation containing reward $1$ for the max-player (thus reward $-1$ for the min-player).  
    All other states emit observations with zero reward. 
    \item \textbf{Transitions.} At the beginning of each episode, the environment starts from $u_1$ or $l_1$ uniformly at random. The transition dynamics from step $1$ to step $H$ only depend on the actions of the max-player while in the final step ($H\rightarrow H+1$) the transitions are determined by the min-player. Formally, 
    \begin{itemize}
        \item When the environment is in the \textbf{upper half} of the POMG: for each step $h\in[H-1]$, the environment will transition to $u_{h+1,0}$ if the max-player takes action $a_0$ and transition to $u_{h+1,1}$ if the max-player takes action $a_1$. At step $H$, the agent will transition to the upper rectangle if the min-player takes action $b_1$ and transition to the lower one if the min-player picks $b_0$.
        \item When the environment is in the \textbf{lower half} of the POMG: for each step $h\in[H-1]$, the environment will transition to $l_{h+1,0}$ if the max-player takes action $a_0$ and transition to $l_{h+1,1}$ if the max-player takes action $a_1$. At step $H$, the agent will transition to the upper rectangle if the min-player takes action $b_0$ and transition to the lower one if the min-player picks $b_1$.
    \end{itemize}
\end{itemize}

\paragraph{Min-player's optimal strategy.} It is direct to see the min-player's optimal strategy is to take action $b_0$ in the upper half of the POMG and action $b_1$ in the lower half, at step $H$. 
This stratety will lead to zero-reward for the max-player. 
However, implementing this strategy requires the min-player to infer which half the environment is in from her observations, which is possible only when the environment has visited some red circles in the first $H$ steps. This is because the min-player directly observes the current state in red circles while observes the same observation $o_{\rm null}$ in all black circles.

\paragraph{Max-player's optimal strategy.} 
To prevent the min-player from discovering which half the environment currently lies in, the max-player's optimal strategy is to avoid visiting any red circles. 

\paragraph{Hardness.}
However, hardness happens if (a) the max-player cannot access the observations of the min-player and (b) for each $h\in\{2,\ldots,H\}$, we uniformly at random pick one of $\{u_{h0},u_{h1}\}$ and one of $\{l_{h0},l_{h1}\}$ to be red circles, and set the remaining ones to be black. From the perspective of the max-player, she cannot directly tell which state is red or black because (a) the difference between black circles and red circles only appear in the min-player's observations, and (b) the max-player cannot see what the min-player observes. As a result, the only useful information for the max-player to figure out which circles are red  is the action picked by the min-player in the final step. 

Now suppose the min-player will play the optimal strategy when she knows which half the environment is in, and pick action $b_0$ when she does not.  
In this case, for the max-player, identifying all the red circles is as hard as learning a bandit with $\Omega(2^{H})$ arms where only one arm has reward $1/2$ and all other arms has reward $0$.
Therefore, by using standard lower bound arguments for bandits, we can show the max-player's cumulative rewards in the first $K=\Theta(2^{H})$ episodes is $0$ with constant probability. In comparison, the optimal strategy, which avoids visiting all red circles, can collect  $K/3$ rewards with high probability. As a result, we obtain the desired $\Omega(\min\{2^H,K\})$ regret lower bound for competing against the Nash value.

\subsection{Playing against adversary in multi-step weakly-revealing POMGs is hard}\label{sec:over-hard}

In this section, we prove that competing with the max-min value is statistically hard even if (i) the POMG is two-player zero-sum and satisfies Assumption \ref{asp:over} with $m=2$ and $\alpha=1$, (ii) the opponent keeps playing a fixed action, and (iii) the player can directly observe the opponents' actions and observations.

\begin{theorem}
    Assume the player can directly observe the opponents' actions and observations.
    For any $L,k\in\N^+$, there exist (i) a  two-player zero-sum POMG  of size $S,A,O,H=\cO(L)$ and satisfying Assumption \ref{asp:over} with $m=2$ and $\alpha=1$, and (ii) an opponent who keeps playing a fixed action $\hat{a}_2$, so that with probability at least $1/2$
    $$\textstyle
    \sum_{t=1}^k \left(\max_{\tilde \pi_1}\min_{\tilde \pi_2} V_1^{\tilde\pi_1 \times \tilde\pi_2} - V_{1}^{\pi^t_1 \times \hat{a}_2}\right) \ge \Omega\left(\min\{2^L,k\} \right),
    $$
    where $\pi_1^t$ is the policy played by the learner in the $t^{\rm th}$ episode. 
    \end{theorem}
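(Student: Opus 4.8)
The plan is to reduce the problem to a needle-in-a-haystack stochastic bandit, exactly in the spirit of the proof of Theorem~\ref{thm:lowerbound}, but with the single piece of information the learner needs hidden behind a two-step probe that the fixed opponent never enables. Nature draws a uniformly random code $c\in\{0,1\}^L$, and the max-player (player~$1$, the learner) earns reward $1$ in an episode if and only if her length-$L$ sequence of binary actions equals $c$; since this reward depends only on player~$1$'s own actions, playing $c$ secures value $1$ against \emph{every} opponent, so $\max_{\tilde\pi_1}\min_{\tilde\pi_2}V_1^{\tilde\pi_1\times\tilde\pi_2}=1$ and the benchmark is a clean high target. The entire content is then to show that, against the fixed opponent, no observation ever carries information about $c$ beyond the terminal reward indicator, while the model is nonetheless $2$-step $1$-weakly revealing.

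Concretely I would use $\cO(L)$ latent states arranged in $L$ layers. In each layer there is a \emph{live} and a \emph{dead} branch, each occurring in a \emph{veiled} and a \emph{revealing} copy, i.e.\ four states per layer (so $S=\cO(L)$); player~$1$ has two actions and player~$2$ has two actions $\{\hat{a}_2,a_2^{\star}\}$. The transition rule, which is the \emph{unknown} part of the model, is: the branch stays live at layer $h$ iff the current branch is live and player~$1$ plays $c_h$ (dead is absorbing), and this depends \emph{only} on player~$1$'s action; player~$2$'s action selects whether the next state is the veiled copy (under $\hat{a}_2$) or the revealing copy (under $a_2^{\star}$). The emissions are fixed and known: both veiled copies emit a common uninformative symbol, the two revealing copies emit distinct deterministic live/dead markers, and at the final layer the live state emits the only reward-$1$ observation, with reward $0$ at every other observation. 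Thus under $\hat{a}_2$ one stays forever in veiled states emitting the constant symbol until the final reward, whereas the markers that would disclose the live/dead flag are reachable only by playing $a_2^{\star}\neq\hat{a}_2$.

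To verify Assumption~\ref{asp:over} with $m=2,\alpha=1$ I would exhibit, for each $h$ and each state, a distinguishing length-one action / length-two observation signature: the revealing states are separated already by their first-step marker, and the two veiled states $V^{\rm L}_h,V^{\rm D}_h$ are separated by taking $a_2=a_2^{\star}$ together with $a_1=c_h$, which sends them to $R^{\rm L}_{h+1}$ and $R^{\rm D}_{h+1}$ and hence to distinct second-step markers. Since these signatures are deterministic $0/1$ patterns hitting distinct rows, the relevant submatrix of $\M_h$ is a column-permutation of the identity, giving $\sigma_S(\M_h)\ge 1$. For the lower bound, note that against the fixed opponent every joint action is $(\cdot,\hat{a}_2)$, so the revealing copies are never entered; by the emission design the law of the \emph{entire} observed trajectory (including player~$2$'s observations and actions, which the learner is allowed to see) is independent of $c$ except through the terminal reward $\mathbf 1[\text{sequence}=c]$. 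The interaction is therefore identical to a bandit with $2^L$ arms, one of reward $1$ and the rest $0$, the good arm planted uniformly; the standard change-of-measure lower bound then yields $\sum_{t=1}^k\big(\max_{\tilde\pi_1}\min_{\tilde\pi_2}V_1^{\tilde\pi_1\times\tilde\pi_2}-V_1^{\pi_1^t\times\hat{a}_2}\big)\ge\Omega(\min\{2^L,k\})$ with probability at least $1/2$.

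The delicate point, and the step I expect to be the main obstacle, is arranging the two opposing requirements to coexist: the trajectory law under $(\cdot,\hat{a}_2)$ must be \emph{provably} independent of $c$ (so that even full visibility of player~$2$'s observations leaks nothing), while the full matrix $\M_h$ ranging over all joint actions must stay perfectly conditioned. I would discharge this by writing the trajectory likelihood under $\hat{a}_2$ explicitly and checking it factors through the single indicator $\mathbf 1[\text{sequence}=c]$, and separately checking that entering the revealing copies via $a_2^{\star}$ neither changes the live/dead flag nor removes the reward---so that an adversarial player~$2$ in the benchmark cannot deny reward and the value $1$ is genuinely guaranteed. Once these two facts are in hand the bandit reduction is immediate and the $\alpha=1$ claim is a one-line singular-value computation on the permutation block.
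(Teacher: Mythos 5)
Your proposal is correct and follows essentially the same route as the paper's proof: a uniformly random hidden code that player~1 must match to earn reward, an opponent action that would reveal the live/dead status (guaranteeing the $2$-step weakly revealing condition with $\alpha=1$ via a permutation-identity submatrix of $\M_h$) but is never played by the fixed opponent, and a resulting reduction to a needle-in-a-haystack bandit with $2^L$ arms. The only difference is bookkeeping---the paper compresses your layered live/dead and veiled/revealing copies into four reused states ($p_0,p_1,q_0,q_1$) with time-dependent transitions---which is immaterial since both give $S=\cO(L)$.
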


\begin{proof}
The hard instance is constructed as following:
\begin{itemize}
    \item \textbf{States and actions}: There are four states: $p_0,p_1$ and $q_0,q_1$. 
    Each player has two actions,  denoted by $\{a_0,a_1\}$ and $\{b_0,b_1\}$ respectively.
    \item \textbf{Emission and reward}: There are three different observations: $o_{\rm dummy}$, $o_1$ and $o_0$. At step $h\in[H-1]$, $p_1$ and $p_0$ emit the same observation $o_{\rm dummy}$. At step $H$, $p_1$ emits $o_1$ while $p_0$ emits $o_0$. 
    Regardless of $h$, $q_0$ always emits $o_0$ and $q_1$ always emits $o_1$. Importantly, all players share the same observation. 
    The reward function is defined so that $r(o_{\rm dummy})=r(o_{0})=0$ and $r(o_1)=1$ for the max-player. Since the game is zero-sum, the reward function for the min-player is simply $-r(\cdot)$.
    \item \textbf{Transition}: Let $x_1,\ldots,x_h$ be a binary sequence sampled independently and uniformly at random from standard Bernoulli distribution.  At step $h=1$, the POMG always starts from state $p_1$. For each step $h\in[H-1]$: 
    \begin{itemize}
        \item If the current state is $p_i$, then the environment will transition to $p_1$ if and only if $i=1$, the max-player plays action $a_{x_h}$, and the min-player plays $b_0$. Otherwise, if the min-player plays $b_1$,  then the environment will transition to $q_i$. Otherwise, the environment will transition to $p_0$.
        \item If the current state is $q_i$, the next state will be $q_1$ regardless of players' actions.
    \end{itemize}
\end{itemize}
We have the following observations: 
\begin{itemize}
    \item If the min-player keeps playing $b_0$, then from the perspective of the max-player the POMG essentially reduces to a multi-arm bandit problem with $2^{H-1}$ arms because in this case the only useful feedback for the max-player is the reward observed at step $H$.
    \item The max-min (Nash) value is equal to $1$, which is attained when the max-player  picks $a_{x_h}$ at step $h$ with probability $1$.
    \item The $2$-step emission-action matrix at each step $h\in[H-1]$ is rank $4$ and has minimum singular value no smaller than $1$, because we can always exactly identify the current state (for step $h\in[H-1]$) by the current-step observation and the next-step observation if the min-player picks action $b_1$ in the current step.
\end{itemize}
Based on the first two observations above, we immediately obtain a $\Theta(\min\{2^H,k\})$ lower bound for competing with the max-min (Nash) value. Using the third observation, we know the POMG is $2$-step weakly revealing with $\alpha=1$, which completes the proof.
\end{proof}


\end{document}